\documentclass{article}

\PassOptionsToPackage{numbers, compress}{natbib}

 \usepackage[preprint]{neurips_2026}
 \usepackage[utf8]{inputenc} %
\usepackage[T1]{fontenc}    %
\usepackage{hyperref}       %
\usepackage{url}            %
\usepackage{booktabs}       %
\usepackage{amsfonts}       %
\usepackage{nicefrac}       %
\usepackage{microtype}      %
\usepackage{xcolor}         %
\usepackage{amsthm}        %
\newtheorem{lemma}{Lemma}

\usepackage{graphicx} %
\usepackage{amsmath,amssymb}
\usepackage{caption}
\usepackage{subcaption}

\usepackage{tikz}
\usetikzlibrary{tikzmark}
\usetikzlibrary{bayesnet}
\usepackage{multirow}
\usepackage[dvipsnames, table]{xcolor}
\usepackage{adjustbox}
\usepackage{listings}
\usepackage{pifont}
\usepackage{wrapfig}
\usepackage{etoolbox}
\usepackage[normalem]{ulem}
\definecolor{lightgray}{gray}{0.9}

\usepackage{hyperref}
\usepackage{url}
\usepackage[english]{babel}
\usepackage[mathscr]{eucal}
\usepackage{xcolor}
\usepackage{bbding}

\definecolor{frozenblue}{RGB}{100, 181, 205}
\definecolor{sigmayellow}{RGB}{250, 243, 201}
\newcommand{\frozen}{\textcolor{frozenblue}{\tiny{\Snowflake}}}

\usepackage{amssymb}
\usepackage{tcolorbox}
\newcommand{\stds}[1]{\footnotesize{$\pm$#1}}

\title{The Key to Going Linear: Analysis-Driven Transformer Linearization}

\author{%
Anna Kuzina\\
Qualcomm AI Research\thanks{Qualcomm AI Research is an initiative of Qualcomm Technologies, Inc.} \\
{\texttt{akuzina@qti.qualcomm.com}}
\And
Paul N. Whatmough\\
Qualcomm AI Research \\
{\texttt{pwhatmou@qti.qualcomm.com}}
\And
Babak Ehteshami Bejnordi \\
Qualcomm AI Research \\
{\texttt{behtesha@qti.qualcomm.com}}
}

\date{February 2026}

\let\oldparagraph\paragraph
\renewcommand{\paragraph}[1]{\vspace{-0.3em}\oldparagraph{#1}}

\begin{document}

\maketitle

\begin{abstract}

The quadratic cost of causal self-attention severely bottlenecks long-context transformer inference. While numerous post hoc linearization pipelines exist, it is difficult to identify which components preserve model quality. This work isolates the effect of state update design in a strict frozen-backbone regime. We show that softmax relies on key-dependent, rank-1 orthogonal projections, elucidating why delta-style networks outperform purely gated accumulation. We identify a potential source of approximation errors and introduce structural interventions, specifically sink tokens, short convolutions, and fixed-budget cache routing, which reduces the remaining gap. We scale this linearization approach across LLaMA and Qwen models up to 32B parameters, outperforming prior post hoc baselines on MMLU and matching the long-context retrieval of complex adaptive-caching frameworks.

\end{abstract}

\section{Introduction}

Causal self-attention is the main obstacle to scaling pretrained transformers to longer contexts. Its quadratic compute and growing KV cache make inference increasingly expensive as sequence length grows. A particularly appealing solution is \textit{post hoc linearization}: converting an existing full-attention model into a linear-time architecture without pretraining from scratch~\cite{zhang2024lolcats,lan2025liger,mcdermott2025lola,meng2026still,van2025lizard,mercat2024linearizing}.
While recent approaches achieve this, they typically combine multiple interventions simultaneously, such as low-rank adaptation (LoRA), sliding-window attention (SWA), hybrid routing, and distillation. These confounding factors obscure a fundamental design question: 
\textbf{which linear state update best approximates pretrained softmax attention?}

To answer this, we take an analysis-driven approach to transformer linearization. We isolate the approximation problem by studying attention replacement in a strict regime: the pretrained backbone remains entirely frozen. Only the newly introduced parameters of the replacement mechanism are trained. This setting isolates the capacity of the linear attention replacements. 

Within this controlled setting, we derive a first-order approximation connecting softmax attention to linear state updates. Our analysis proves that softmax naturally applies key-dependent, rank-1 orthogonal projections. Delta-style updates (like Gated Delta Networks) natively implement these corrections through rank-1 operators constructed from keys, suppressing components aligned with previous keys. Conversely, pure gated accumulation (like Gated Linear Attention) relies on key-independent decay, missing this vital geometric correction.

We empirically validate these insights across LLaMA and Qwen backbones. We first compare normalized kernelized linear attention~\cite{zhang2024hedgehog}, Gated Linear Attention (GLA)~\cite{yang2023gated}, and Gated Delta Networks~\cite{team2025kimi} in the strict frozen-backbone setting. Delta-style updates consistently provide the strongest approximation. Guided by a theoretically identified approximation gap, we introduce practical design choices: a disjoint sliding-window path, sink tokens, and  projection adaptation through short convolutions or LoRA. We find these components compensate for the limitations of linear mechanisms, narrowing the performance gap and serving as a robust default for post hoc linearization. Our main contributions are as follows:

\begin{itemize}
    \item We study post hoc linearization of pretrained language models in a strict frozen-backbone setting, and empirically compare kernelized, gated, and delta-based linear attention mechanisms as replacements for causal self-attention.
    \item We provide a first-order approximation proving delta-based mechanisms faithfully capture the key-dependent, rank-1 dynamics of softmax attention, unlike pure gated updates.
    \item We empirically demonstrate how practical interventions including SWA, sink tokens, and limited Q/K/V adaptation (e.g. short convolutions) compensate for approximation gaps.
    \item We outperform prior post hoc baselines on MMLU, match the long-context retrieval of complex adaptive token-caching frameworks, and remain competitive across all downstream tasks, all while demonstrating effectiveness in hybrid settings and consistent scaling across LLaMA and Qwen models up to 32B parameters.   
\end{itemize}

\section{Background and Related works}

\textbf{Linear-Time Sequence Models.} We build upon linear-time models that serve as primitives for transformer linearization. Early kernelized approaches, such as Performer~\cite{choromanski2020rethinking} and later softmax-mimicking designs such as Hedgehog/Porcupine~\cite{zhang2024hedgehog} develop feature-map based approximations to softmax attention, while recurrent and state-based models such as Gated Linear Attention (GLA)~\cite{yang2023gated}, delta-rule linear transformers~\cite{yang2024parallelizing}, RetNet~\cite{sun2023retentive}, Longhorn~\cite{liu2024longhorn}, Gated Delta Networks (GDN)~\cite{yang2024gated}, Kimi Linear~\cite{team2025kimi}, and transformer-SSM views of Mamba-style models~\cite{dao2024transformers, egorov2025myosotis} provide increasingly expressive state updates for long-context modeling. These works motivate the replacement mechanisms studied here, but they are typically introduced as standalone architectures or pretraining recipes rather than analyzed as drop-in substitutes for frozen pretrained attention blocks.

\textbf{Post-Hoc Linearization of Pretrained Transformers.} Our work directly relates to post-hoc linearization, which converts full-attention transformers into subquadratic models for efficient inference. To recover the performance lost during this conversion, existing approaches frequently employ various structural interventions. These include intra-layer hybrid attention mechanisms, where Sliding Window Attention (SWA) is combined with the linear path to preserve essential local context, as seen in LoLCATs~\cite{zhang2024lolcats} and Liger~\cite{lan2025liger}. Some works also rely on sparse caching strategies such as attention sinks and hybrid token selection in LoLA~\cite{mcdermott2025lola}, STILL~\cite{meng2026still}, and Lizard~\cite{van2025lizard}, to maintain exact long-range retrieval. Other common components include adaptive distillation~\cite{ro2025fly}, direct layerwise linearization~\cite{mercat2024linearizing}, and low-rank adaptation (LoRA) on the original Q, K, V projections~\cite{zhang2024lolcats,mcdermott2025lola}.

While these interventions reduce the downstream performance gap, combining them obscures the approximation capacity of linear state updates. We therefore evaluate linear updates in a frozen‑backbone setting to isolate their limitations. After identifying these gaps both theoretically and empirically, we reintroduce SWA, sink tokens, short convolutions, and hybrid routing to assess their contribution.

\section{Full Attention Approximation via Linear Mechanisms}

We study the problem of approximating pretrained causal self-attention using linear-time mechanisms while minimizing architectural changes and finetuning overhead.
Consider a pretrained transformer with a causal attention block, 
our goal is to replace each attention block with a mechanism that has linear complexity with respect to the number of tokens, while preserving downstream task performance as much as possible.

\begin{figure}[t]
    \centering
        \includegraphics[width=0.99\linewidth]{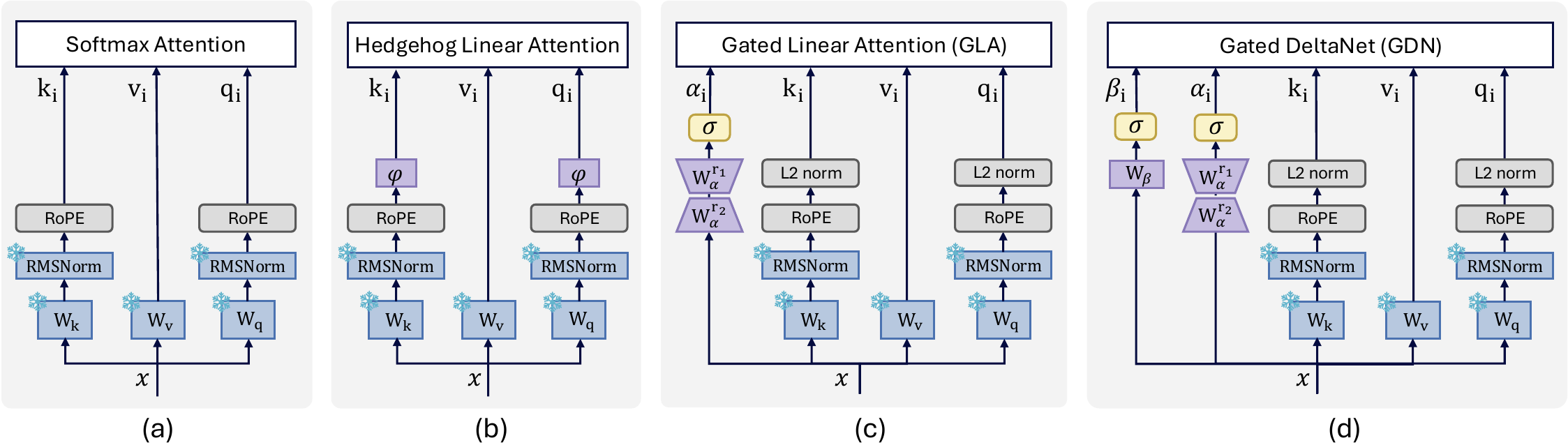}
        \vskip -5pt
        \caption{\textbf{Attention replacement blocks evaluated in our strict frozen-backbone setting.} Blue \frozen{} markers denote frozen pretrained parameters. The pretrained parameters including $W_q$,$W_k$,$W_v$, and RMSNorm parameters are kept frozen, while only the newly introduced mechanism-specific parameters are trained. (a) Standard Softmax. (b) Hedgehog~\cite{zhang2024hedgehog} replaces the softmax kernel with learned feature maps $\varphi$. (c) GLA~\cite{yang2023gated} introduces an input-dependent decay $\alpha_i$. (d) GDN~\cite{team2025kimi} applies a key-dependent rank-1 delta update via $\beta_i$. \tikzmarknode[draw,inner sep=2pt,rounded corners,fill={rgb,255:red,250; green,243; blue,201}]{A}{$\sigma$} denotes sigmoid function.}\label{fig:att_mechanisms}
        \vskip -15pt
\end{figure}

\subsection{Overview of Linear Attention Mechanisms}

We consider two families of linear mechanisms used for attention replacement:
\emph{normalized kernel attention} and \emph{state-based unnormalized attention}. In Figure~\ref{fig:att_mechanisms} we present as overview of the linear attention mechanism we use for linearization of the Softmax Attention. 
Recall that Softmax Attention can be written as 
\begin{equation}
y_t
=
\sum_{i \le t}
\frac{\exp(q_t^\top k_i / \sqrt{d})}
{\sum_{j \le t} \exp(q_t^\top k_j / \sqrt{d})}
\, v_i .
\end{equation}
The exponential kernel $\exp(q^\top k)$ can be viewed as a similarity function
$\mathcal{K}(q, k)$, motivating kernel-based approximations.

\subsubsection{Kernelized Linear Attention}
Following prior work on kernelized linear attention, we approximate the softmax
kernel using feature maps $\phi : \mathbb{R}^d \to \mathbb{R}^{d'}$ such that $\exp(q^\top k) \approx \phi(q)^\top \phi(k).$
Substituting this into attention yields
\begin{align}
y_t = \frac{\phi(q_t)^\top S_t}{\phi(q_t)^\top Z_t}, \qquad S_t = \sum_{i \le t} \phi(k_i) v_i^\top,
\qquad
Z_t = \sum_{i \le t} \phi(k_i),
\label{eq:normalized-linear-attention}
\end{align}
This form \emph{explicitly normalizes} attention weights. The denominator
$Z_t$ acts as a running partition function shared across values.

\paragraph{Hedgehog.}
Following prior linearization works~\citep{zhang2024lolcats}, we consider Hedgehog~\citep{zhang2024hedgehog} feature maps. 
\begin{equation}
   \phi(x) = \sigma(xW) \oplus \sigma(-xW),
\end{equation}
where $x$ is keys or queries vector, $\sigma$ denotes softmax and $\oplus$ is a concatenation operator.

\subsubsection{State-Based Linear Attention without Normalization (GLA, GDN)}

In contrast, Gated Linear Attention (GLA) and Gated DeltaNet (GDN) do not explicitly normalize attention weights. These methods maintain a recurrent \emph{state} $S_t$ such that
\begin{equation}
y_t = q_t^\top S_t, \qquad S_t = \mathcal{U}(S_{t-1}, k_t, v_t),
\end{equation}
where $\mathcal{U}$ is a general form of a linear or affine update. However, by unrolling the recurrence, we can rewrite them in the same format. GLA~\citep{yang2023gated} updates the state using
\begin{equation}
S_t = G_t S_{t-1} + k_t^\top v_t,
\end{equation}
where the forget gate is usually chosen to be a diagonal input-dependent matrix $G_t = \mathrm{diag}(\alpha_t)$.

Gated Delta Net~\citep{yang2024gated} augments gated linear attention with a delta rule~\citep{yang2024parallelizing}:
\begin{equation}
S_t = (I - \beta_t k_t k_t^\top)G_t S_{t-1} + \beta_t k_t^\top v_t,
\end{equation}
where $\beta_t$ is an input-dependent per-head scalar. Note that originally GDN used a per-head scalar gate instead of a diagonal matrix. Kimi Linear~\citep{team2025kimi} extended it to the same form as GLA. Unless stated explicitly, we will refer to the Kimi version of GatedDeltaNet as GDN.

\subsection{From Softmax to Delta Rules: A First-Order Approximation View}
\label{sec:delta-vs-gating}

Kernelized linear attention methods replace the softmax kernel with a factorized feature-map representation, yielding a computationally linear form that often serves as a faithful approximation to softmax attention~\citep{choromanski2020rethinking, qin2022cosformer}. In contrast, for state‑space linear attention variants such as GLA and Gated DeltaNet, it remains unclear whether their update rules correspond to softmax attention, despite strong empirical pre‑training performance. In this work, we derive a first‑order approximation to softmax attention and show that its structure aligns with Gated DeltaNet, supporting its suitability for linearization.

To understand why delta-based mechanisms are a faithful approximation of
attention, we analyze attention outputs as weighted sums of values.
The attention output at time $t$ (before the output projection) can be written as a weighted sum of values
$y_t = \sum_{i \le t} P_{t,i} \, v_i,$
where $P_{t,i}$ are the causal attention weights.
For full softmax attention, they take the following form:
\begin{equation}
P^{\mathrm{full}}_{i}
=
\frac{\exp\!\big(\frac{1}{\sqrt{d}}\, q^\top k_i\big)}
{\sum_{j=1}^{t} \exp\!\big(\frac{1}{\sqrt{d}}\, q^\top k_j\big)}.
\label{eq:full-att-softmax}
\end{equation}

These weights are normalized, nonnegative, and depend jointly on the current
query and all previous keys, inducing adaptive reweighting and implicit
forgetting.

\paragraph{Gated Linear Attention (GLA).}
GLA induces implicit weights
\begin{equation}
P^{\mathrm{GLA}}_{i}
=
q^\top
\Bigl[
\prod_{j=i+1}^{t} \mathrm{diag}(\alpha_j)
\Bigr]
k_i
=
q^\top W_i k_i,
\label{eq:gla-weights}
\end{equation}
The decay matrix $W_i$ depends on recency, since is a product of vector-valued gates $\alpha_j \in (0,1)^d$, but is query- and key-independent.

\paragraph{Gated DeltaNet (GDN).}
DeltaNet induces weights
\begin{equation}
P^{\Delta}_{i}
=
\beta_i \,
q^\top W_i
\Bigl(
\prod_{j=i+1}^{t}
(I - \beta_j k_j k_j^\top)
\Bigr)
k_i,
\label{eq:gda-weights}
\end{equation}
Unlike GLA, the effective weights depend explicitly on previous keys via rank-1 operators.
Each factor $(I - \beta_j k_j k_j^\top)$ preserves components orthogonal to $k_j$ while downweighting the rest.

\subsubsection{Derivations}
\label{subsec:softmax-to-delta}

 We start from the softmax attention weights $P^{\mathrm{full}}_{i}$ and aim to approximate them with something of the form $q^\top A k_i$ for some matrix $A$, linking it to linear attention.

\begin{lemma}[First--order Taylor approximation of softmax]
Let
$p(x) = \mathrm{softmax}(x)$,
$x_i = q_t^\top k_i$, and
$\bar{x} = \frac{1}{t}\sum_{j=1}^t x_j.$
Then for each index $i$,
\[
p_i(x)
=
\frac{1}{t}
+
\frac{1}{t}\bigl(x_i - \bar{x}\bigr)
+
\varepsilon_i^{(1)}(x),
\]
where the error term satisfies
\[
\|\varepsilon^{(1)}(x)\|_\infty \le \frac{1}{2}\|x\|_2^2 .
\]
\end{lemma}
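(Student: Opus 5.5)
The plan is a second-order Taylor expansion of the smooth map $p=\mathrm{softmax}:\mathbb{R}^t\to\mathbb{R}^t$ around the origin, keeping the remainder in integral form. First I will check the zeroth- and first-order terms. At $x=0$ every coordinate equals $p_i(0)=1/t$. The Jacobian of softmax is $J(x)=\mathrm{diag}(p)-pp^\top$, so $J(0)=\frac1t\bigl(I-\frac1t\mathbf{1}\mathbf{1}^\top\bigr)$. Applying it to $x$ gives $(J(0)x)_i=\frac1t(x_i-\bar x)$, which matches the linear term in the statement. For each fixed $i$, Taylor's theorem with integral remainder along the segment $s\mapsto sx$ then gives
\[
\varepsilon_i^{(1)}(x)=\int_0^1(1-s)\,x^\top\nabla^2 p_i(sx)\,x\,ds .
\]
The lemma therefore follows once I show $|x^\top\nabla^2p_i(y)\,x|\le\|x\|_2^2$ uniformly in $y$, since $\int_0^1(1-s)\,ds=\tfrac12$.

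The second step is to compute the quadratic form of the Hessian in closed form. Differentiating $\partial_k p_i=p_i(\delta_{ik}-p_k)$ once more, I get
\[
x^\top\nabla^2p_i(y)\,x = p_i\Bigl[(x_i-m)^2-\mathrm{Var}_p(x)\Bigr],
\qquad m=\sum_j p_jx_j,\quad \mathrm{Var}_p(x)=\sum_jp_jx_j^2-m^2,
\]
where $p=p(y)$. This is a difference of two nonnegative quantities, so its absolute value is at most the larger of the two. The variance term is easy: $p_i\,\mathrm{Var}_p(x)\le\sum_jp_jx_j^2\le\|x\|_\infty^2$.

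The main obstacle is the term $p_i(x_i-m)^2$, because the naive estimate $(x_i-m)^2\le 4\|x\|_\infty^2$ loses a factor of $4$. To fix this I will write $m=p_ix_i+(1-p_i)m'$, where $m'$ is the $p$-weighted mean of the other coordinates, so that $|m'|\le\|x\|_\infty$. Then
\[
p_i(x_i-m)^2=p_i(1-p_i)^2(x_i-m')^2\le \tfrac{4}{27}\cdot 4\|x\|_\infty^2<\|x\|_\infty^2,
\]
using $\max_{u\in[0,1]}u(1-u)^2=4/27$. Combining the two cases gives $|x^\top\nabla^2p_i\,x|\le\|x\|_\infty^2\le\|x\|_2^2$. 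Integrating against $(1-s)$ then yields $|\varepsilon^{(1)}_i(x)|\le\frac12\|x\|_2^2$ for every $i$, which is the claimed $\ell_\infty$ bound, in fact with $\|x\|_\infty^2$ in place of $\|x\|_2^2$.
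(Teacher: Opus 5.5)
Your proof is correct, and its first half (evaluating at the origin, computing $J(0)=\frac1t(I-\frac1t\mathbf{1}\mathbf{1}^\top)$, and reading off the linear term $\frac1t(x_i-\bar x)$) is exactly what the paper does. The difference is in the remainder. The paper only asserts that $\|R(x)\|_\infty\le\frac12\|x\|_2^2$ ``because softmax is smooth''. Smoothness alone gives a local $O(\|x\|^2)$ estimate with an unspecified constant, not this global bound with constant $\frac12$.

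You actually establish the constant. You write the remainder in integral form along $s\mapsto sx$ and compute the Hessian quadratic form $p_i[(x_i-m)^2-\mathrm{Var}_p(x)]$ in closed form. You then bound it by $\|x\|_\infty^2$ uniformly in the base point. The step that makes this work is the splitting $x_i-m=(1-p_i)(x_i-m')$ together with $\max_{u\in[0,1]}u(1-u)^2=4/27$. One trivial edge case: for $t=1$, $m'$ is undefined, but then $p\equiv 1$ and the remainder vanishes.

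So your argument supplies the justification the paper's proof omits, and it yields the sharper bound $\frac12\|x\|_\infty^2$. The paper's version gains only brevity.
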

See Appendix~\ref{app:proof1} for the proof.
Substituting $x_i = q_t^\top k_i / \sqrt{d}$ gives
$
p_i(x)
\approx
\frac{1}{t}
+
\frac{1}{t\sqrt{d}}q_t^\top (k_i-\bar{k}).
$

\begin{lemma}[Projection approximation]
Define the mean key and the projection matrix
\[
\bar{k}=\frac{1}{t}\sum_{j=1}^t k_j,
\qquad
\Pi_{\bar{k}}=\frac{\bar{k}\bar{k}^\top}{\|\bar{k}\|^2}.
\]
Assume all keys are normalized, $\|k_i\|=1 \quad \text{for all } i$. Then
\begin{equation}\label{eq:residual}    
k_i-\bar{k}=(I-\Pi_{\bar{k}})k_i+r_i,
\end{equation}
where the residual satisfies $
\|r_i\|
\le
|\cos(\alpha_i)-1| + |1-\|\bar{k}\||,
$
and $\alpha_i$ is the angle between $k_i$ and $\bar{k}$.
\end{lemma}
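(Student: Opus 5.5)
The residual is simply whatever is left over, so the plan is to define
\[
r_i := (k_i-\bar{k})-(I-\Pi_{\bar{k}})k_i=\Pi_{\bar{k}}k_i-\bar{k},
\]
which makes \eqref{eq:residual} hold by definition. The only content of the lemma is then the bound on $\|r_i\|$. Throughout we assume $\bar{k}\neq 0$, since otherwise $\Pi_{\bar{k}}$ is undefined. We write $u=\bar{k}/\|\bar{k}\|$ for the unit vector along the mean key.

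\textbf{Step 1: reduce to a scalar.} Since $\|k_i\|=1$, we have $u^\top k_i=\cos(\alpha_i)$, and therefore $\Pi_{\bar{k}}k_i=\cos(\alpha_i)\,u$. Writing $\bar{k}=\|\bar{k}\|\,u$ gives
\[
r_i=\bigl(\cos(\alpha_i)-\|\bar{k}\|\bigr)u .
\]
Because $u$ is a unit vector, this yields the exact identity $\|r_i\|=|\cos(\alpha_i)-\|\bar{k}\||$.

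\textbf{Step 2: triangle inequality.} Insert $\pm 1$ inside the absolute value:
\[
|\cos(\alpha_i)-\|\bar{k}\||\le|\cos(\alpha_i)-1|+|1-\|\bar{k}\||.
\]
This is exactly the stated bound.

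There is no real obstacle in this argument. The only step needing care is Step 1: we must use $\|k_i\|=1$ so that the projection coefficient equals $\cos(\alpha_i)$ rather than $\|k_i\|\cos(\alpha_i)$, and we must note that the residual is parallel to $\bar{k}$. The bound is small precisely when each key is nearly aligned with the mean and the mean is close to unit norm. In the subsequent analysis, this is what allows replacing $q_t^\top(k_i-\bar{k})$ in Lemma~1 by $q_t^\top(I-\Pi_{\bar{k}})k_i$, a rank-1 projection acting on $k_i$.
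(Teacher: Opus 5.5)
Your proposal is correct and follows essentially the same route as the paper: you define $r_i=\Pi_{\bar{k}}k_i-\bar{k}$, use $\|k_i\|=1$ to obtain the exact identity $\|r_i\|=\bigl|\cos(\alpha_i)-\|\bar{k}\|\bigr|$, and finish with the triangle inequality. Your explicit assumption $\bar{k}\neq 0$, needed for $\Pi_{\bar{k}}$ to be defined, is a small point the paper leaves implicit.
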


Here, we replace subtraction of the average with the projection orthogonal to the average key. This introduces an error that vanishes when keys are directionally aligned
($\alpha_i=0$) and concentrated around their mean
($\|\bar{k}\|\approx1$). See App.~\ref{app:proof2} for the proof.

Combining these two steps, we showed that $P^{\mathrm{full}}_{i} \approx \frac{1}{t\sqrt{d}}q^\top (I-\Pi_{\bar{k}})k_i + \frac{1}{t}$.

\begin{lemma}[Rank--1 product approximation]
Let $\|k_j\|=1$ and define
\[
M_t = \prod_{j=1}^t D_j,\quad \text{where }
D_j = I - \beta_j k_j k_j^\top,
\text{ and }
\beta_j \in (0,1),
\]
Then
\[
M_t
=
I
-
\sum_{j=1}^t \beta_j k_j k_j^\top
+
E_t,
\qquad
\|E_t\|_2 \le C\, t (\max_j \beta_j)^2.
\]
If the vectors $k_j$ are concentrated around their mean
$\bar{k}=\frac{1}{t}\sum_{j=1}^t k_j$, then
\[
M_t \approx I - c\,\Pi_{\bar{k}},
\quad
\Pi_{\bar{k}}=\frac{\bar{k}\bar{k}^\top}{\|\bar{k}\|^2},
\]
for some scalar $c \ge 0$.
\end{lemma}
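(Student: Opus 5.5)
The plan is to expand the product $M_t=\prod_{j=1}^t D_j$ as a polynomial in the gates $\beta_j$ and isolate its zeroth- and first-order parts. I will not expand all $2^t$ subset terms. Instead I will argue by induction on $t$, writing $L_t := I-\sum_{j\le t}\beta_j k_jk_j^\top$ and $E_t := M_t - L_t$, with $E_1=0$. Multiplying on the right by $D_t$ gives
\[
M_t = (L_{t-1}+E_{t-1})(I-\beta_t k_tk_t^\top),
\]
so $E_t = E_{t-1}D_t + \beta_t\bigl(\sum_{j<t}\beta_j k_jk_j^\top\bigr)k_tk_t^\top$. Two facts control this recursion. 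First, since $\|k_t\|=1$ and $\beta_t\in(0,1)$, $D_t$ is symmetric with eigenvalues $1-\beta_t$ and $1$, so $\|D_t\|_2\le 1$. Second, the new cross term has norm at most $\beta_t\sum_{j<t}\beta_j \le (t-1)\beta_{\max}^2$. Hence $\|E_t\|_2\le \|E_{t-1}\|_2 + (t-1)\beta_{\max}^2$.

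Unrolling this recursion gives $\|E_t\|_2\le \tfrac{t(t-1)}{2}\beta_{\max}^2$, or more sharply $\|E_t\|_2\le \tfrac12\bigl(\sum_j\beta_j\bigr)^2$. This is the main obstacle. The bound is quadratic in $t$, and I do not see how to obtain a genuinely linear-in-$t$ bound with a universal constant. The subset expansion gives $\sum_{m\ge2}\binom{t}{m}\beta_{\max}^m$, which is no better, and for aligned keys the product really behaves like $(1-\beta)^t$, whose second-order term is of size $t^2\beta^2/2$. I would therefore state the first part in the regime where the product $t\,\beta_{\max}$ stays bounded, which is the natural small-gate regime. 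There $\|E_t\|_2\le \tfrac12(t\beta_{\max})\cdot t\beta_{\max}^2$, and this is of the claimed form $C\,t\,\beta_{\max}^2$ with $C=\tfrac12\, t\beta_{\max}$, so $C=O(1)$ in that regime. I would state explicitly that $C$ may depend on $t\beta_{\max}$, since that dependence is what makes the statement true.

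For the second part I will write $k_j=\bar k+\delta_j$ with $\sum_j\delta_j=0$ and $\|\delta_j\|\le\delta$, which is the concentration assumption. Then
\[
\sum_j\beta_jk_jk_j^\top = \Bigl(\sum_j\beta_j\Bigr)\bar k\bar k^\top + \sum_j\beta_j(\bar k\delta_j^\top+\delta_j\bar k^\top) + \sum_j\beta_j\delta_j\delta_j^\top .
\]
The last two sums have norm $O\bigl(\delta\sum_j\beta_j\bigr)$. The cross terms vanish exactly when the $\beta_j$ are equal, because $\sum_j\delta_j=0$. Setting $c=\bigl(\sum_j\beta_j\bigr)\|\bar k\|^2\ge0$ yields
\[
M_t = I - c\,\Pi_{\bar k} + O\Bigl(\delta\sum_j\beta_j\Bigr) + E_t,
\]
which is the claimed approximation. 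Its quality is governed by the key spread $\delta$, as in the Projection approximation lemma, and by $t\beta_{\max}$ through $E_t$. This matches the form $q^\top(I-\Pi_{\bar k})k_i$ obtained from the first two lemmas, up to the scale $c$.
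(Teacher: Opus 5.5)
Your argument is correct, and your doubt about the first bound is justified. With an absolute constant $C$, the estimate $\|E_t\|_2\le C\,t(\max_j\beta_j)^2$ is false, and your aligned-key example shows it. For $k_j\equiv k$ and $\beta_j\equiv\beta$ one has exactly $E_t=\bigl((1-\beta)^t-1+t\beta\bigr)kk^\top$. At $\beta=1/t$ its norm tends to $e^{-1}$, while $t\beta^2=1/t$.

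The paper's proof takes the subset-expansion route you set aside. It bounds each order-$m$ term by $(\max_j\beta_j)^m$, counts $O(t^m)$ such terms, and then writes the total as $C\sum_{m\ge2}t(\max_j\beta_j)^m$, silently replacing $t^m$ by $t$. Done correctly, that route gives only $\sum_{m\ge2}\binom{t}{m}\beta_{\max}^m=(1+\beta_{\max})^t-1-t\beta_{\max}$, which is exponential in $t\beta_{\max}$.

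Your telescoping recursion $E_t=E_{t-1}D_t+\beta_t\bigl(\sum_{j<t}\beta_jk_jk_j^\top\bigr)k_tk_t^\top$, together with $\|D_t\|_2\le1$, is strictly better. It yields $\|E_t\|_2\le\sum_{j<s}\beta_j\beta_s\le\frac12\bigl(\sum_j\beta_j\bigr)^2$ for every $t$. This matches the aligned example to leading order when $t\beta$ is small, so it is the right replacement for the first claim. State it in that form rather than absorbing $t\beta_{\max}$ into the constant $C$. Also note that the expansion is informative only when $\sum_j\beta_j$ is small, not merely bounded. For aligned keys, $M_t$ saturates at $I-kk^\top$, while $I-\sum_j\beta_jk_jk_j^\top=I-t\beta\,kk^\top$ has the eigenvalue $1-t\beta$.

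For the concentration step, the paper uses $k_jk_j^\top\approx\cos^2(\alpha_j)\,\bar k\bar k^\top$. The consistent form is $\cos^2(\alpha_j)\,\Pi_{\bar k}$, the compression of $k_jk_j^\top$ onto $\mathrm{span}(\bar k)$. The paper then sets $c=\sum_j\beta_j\cos^2(\alpha_j)$, gives no error term, and drops $E_t$. Your exact split $k_j=\bar k+\delta_j$ with $\sum_j\delta_j=0$ gives $c=\bigl(\sum_j\beta_j\bigr)\|\bar k\|^2$, with an explicit $O\bigl(\delta\sum_j\beta_j\bigr)$ error plus $E_t$. Since $\|\bar k\|=\frac1t\sum_j\cos(\alpha_j)$, the two values of $c$ coincide when all angles $\alpha_j$ are equal. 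Otherwise they differ by $O\bigl(\delta^2\sum_j\beta_j\bigr)$, within your error term. So your version reaches the same conclusion with the controlling quantities made explicit.
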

See App.~\ref{app:proof3} for the proof. This final approximation steps allow us to replace a single projection to the direction orthogonal to the mean key with the sequence of projection that depend on all the keys. Combining it with the previous result, we obtain the following approximation to the full attention weights:
\begin{equation}
    P^{\mathrm{full}}_{i} \approx \frac{1}{t\sqrt{d}}q^\top \prod_j (I-\beta_j k_jk_j^\top)k_i + \frac{1}{t},
\end{equation}
which aligns with the linear attention weights we get when using delta rule (Eq.~\ref{eq:gda-weights}).
Note that while the Taylor expansion yields a constant bias $\frac{1}{t}$, practical GDN implementations do not have it, focusing entirely on capturing the dynamic, key-dependent variance. For large $t$ values, this becomes negligible; however, we expect it to lead to higher approximation error at the beginning of the sequence.

\subsubsection{Gated Linear Attention with Key-Dependent Gate}
These approximations demonstrate that delta-based updates capture the first-order dynamics of softmax attention, while purely gated accumulation does not.
However, we can potentially reduce the gap between the two linear kernels by introducing key‑dependent gating. Standard GLA attention (defined in Eq.~\ref{eq:gla-weights}) computes a product of diagonal matrices, which only depends on the hidden states $h_j$ through the gates $\alpha_j = \sigma(W^g h_j)$. 
To make GLA a better approximation of full attention, while preserving its hardware-efficient diagonal structure, we suggest using the following form:
\begin{equation}
    y_t = \sum_i (q^\top \big[\prod_{j=i+1}^t\operatorname{diag}(\alpha_j) \Big( I - \beta_j\,\operatorname{Diag}\big( k_j \odot k_j\big)\Big) \big] k_i)v_i
\end{equation}
where we use the same $\beta_j$ as in GDN.
We refer to this variant as \textit{kGLA} (Key-Gated Linear Attention).

While this diagonal approximation $\operatorname{Diag}(k_j \odot k_j)$ is inherently less expressive than the exact rank-1 update $k_j k_j^\top$ used in the delta rule, it may serve as a stronger structural prior for GLA models and allows us to leverage existing highly optimized GLA kernels.

\subsection{Empirical Evaluation}
\paragraph{Setup.}
We start by evaluating linear blocks without additional components like SWA or LoRA. We replace all layers in our full-attention baselines (\textsc{Llama3.1-8b} and \textsc{Qwen3-8b-Base}) with corresponding linear blocks.
We use the \texttt{
FLA}~\footnote{\texttt{Flash Linear Attention~\citep{yang2024fla}}}
library for efficient GLA and GDN kernel implementations. For Hedgehog,  we use \texttt{FLA} implementation of the feature maps $\phi$ and implement its chunk-wise-parallel form for training and recurrent form for inference in \texttt{pytorch}. We freeze all parameters of the original model and train only the newly added parameters. We show the parametrization of the linear attention models as well as frozen and trainable parameters in Figure~\ref{fig:att_mechanisms}. See App.~\ref{app:hyperparams} for the full list of hyperparameters and implementation details. 

We train the model end-to-end with the standard next-token prediction loss (cross-entropy) on the DCLM-Edu~\citep{li2024datacomp, allal2025smollm2smolgoesbig} dataset for 10M tokens. We pack sequences to 4096 tokens and perform 2,500 training iterations, with batch size 1. Training GLA and GDN variants takes 1 hour on a single H100 GPU. Training Hedgehog takes approximately 8 hours, due to the absence of a Triton implementation.

Note that this setting is intentionally stricter than most prior linear-attention work: we do not perform a distillation step, do not modify the original Q, K, V projections, and do not add sliding-window attention. This allows us to isolate the approximation properties of the linear mechanisms themselves.

\begin{figure}[t] %
\centering
\begin{subfigure}{0.3\linewidth}
  \centering
    \includegraphics[width=0.9\linewidth]{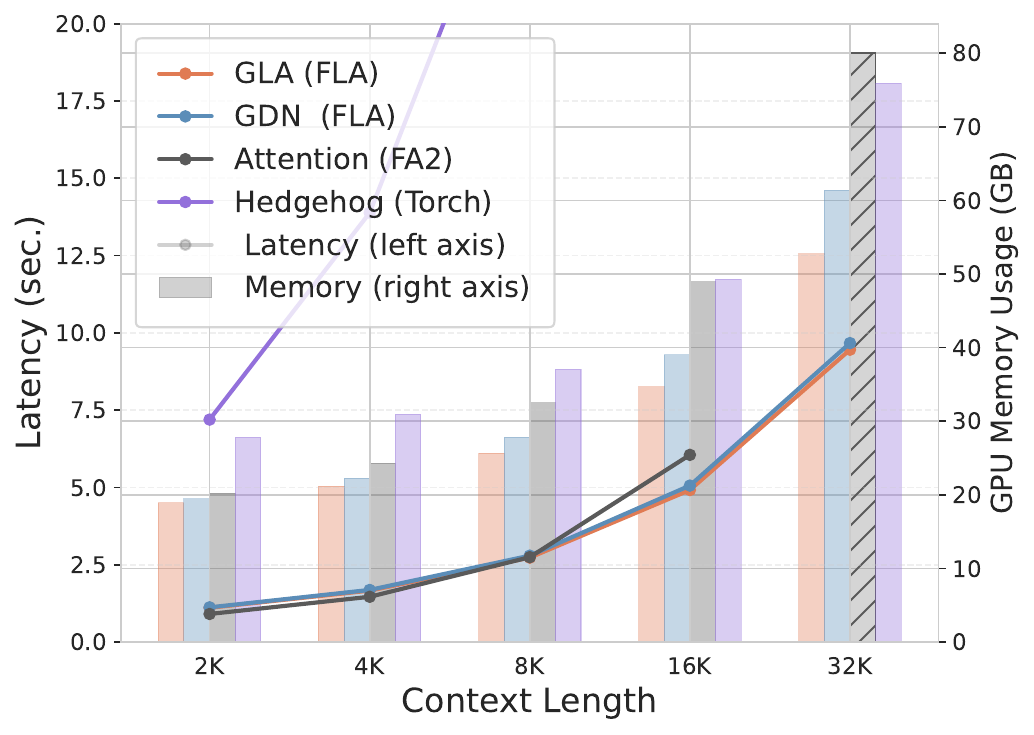} 
    \vskip -5pt
  \caption{Latency}
\end{subfigure}\hfill
\begin{subfigure}{0.3\linewidth}
  \centering
  \includegraphics[width=0.9\linewidth]{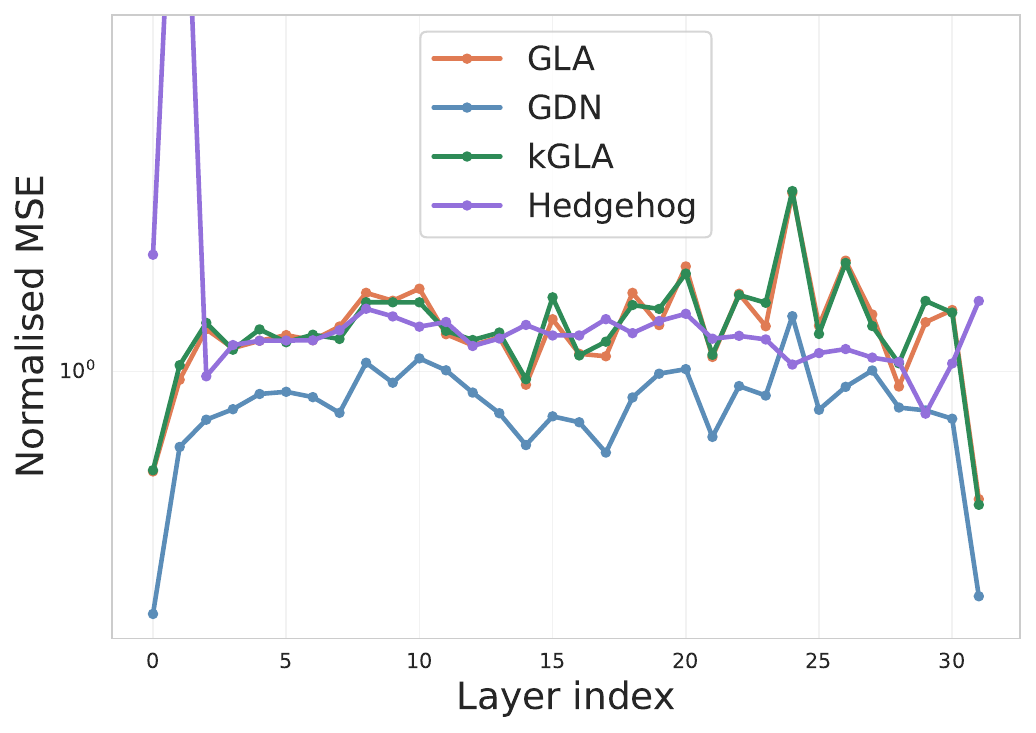}
  \vskip -5pt
  \caption{MSE Per Layer}
\end{subfigure}\hfill
\begin{subfigure}{0.3\linewidth}
  \centering
  \includegraphics[width=0.9\linewidth]{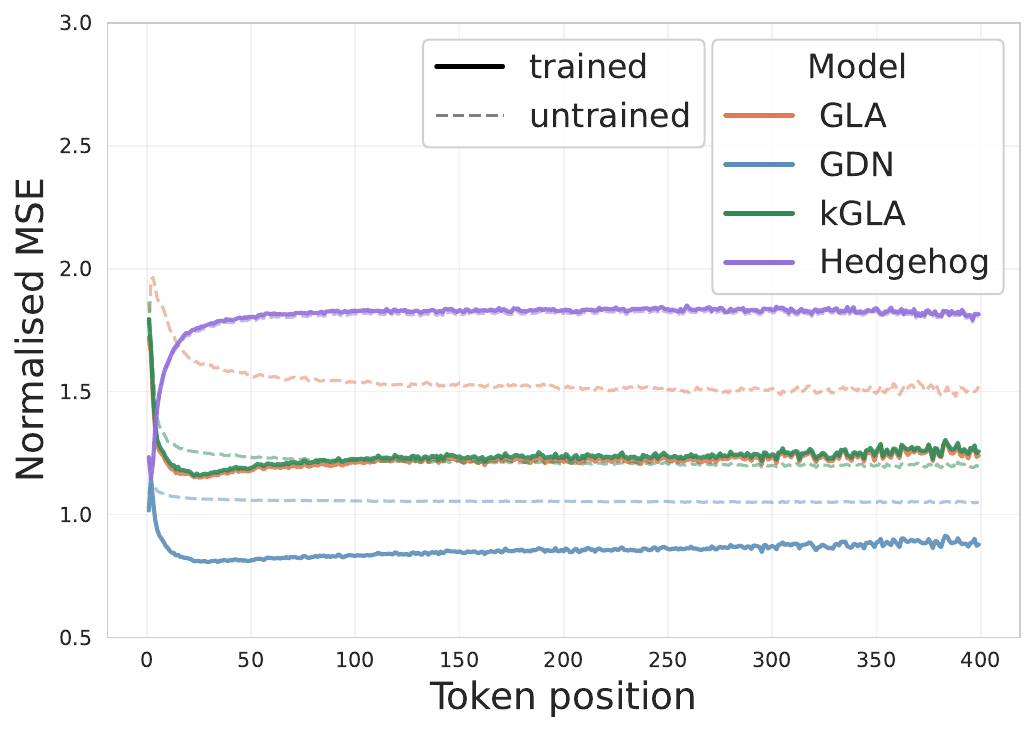} 
  \vskip -5pt
  \caption{MSE per Token}
\end{subfigure}
\vskip -5pt
\caption{Comparison of linear attention variants for Llama3.1-8b linearization.}\label{fig:pure_mse_latecy}
\vskip -10pt
\end{figure}
\begin{figure}[t] 
\centering
\begin{subfigure}{0.62\linewidth}
  \centering
  \includegraphics[width=0.9\linewidth]{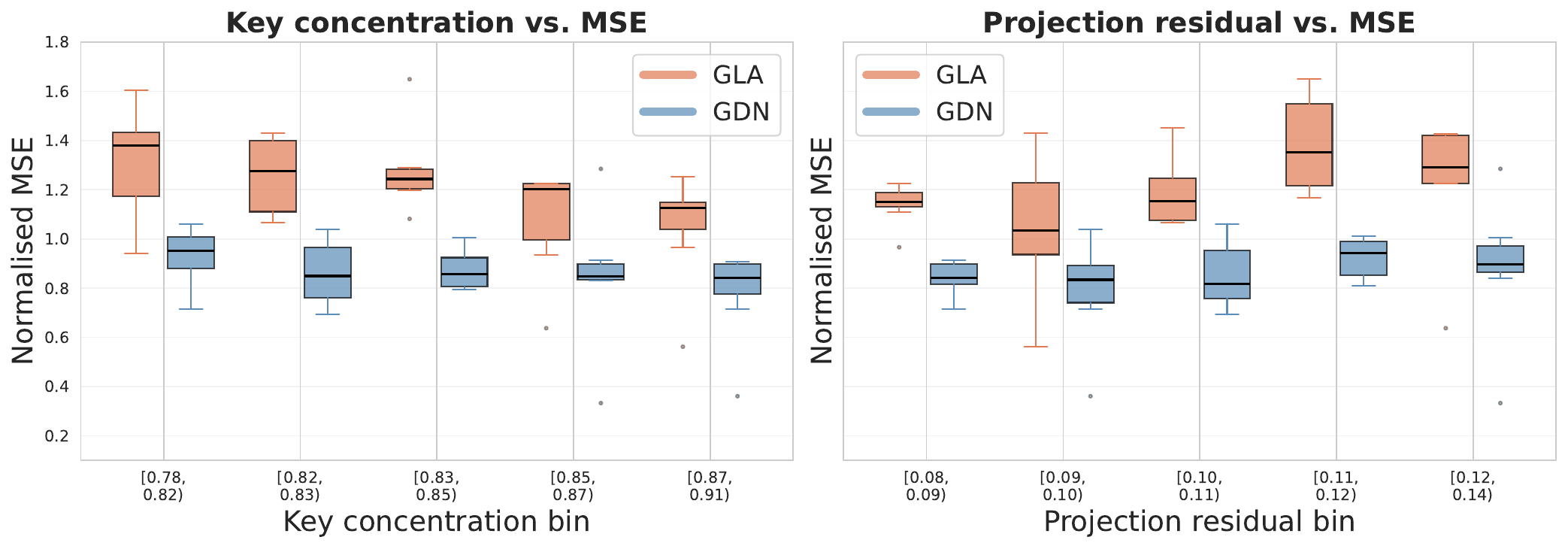} 
  \vskip -5pt
            \caption{Per layer normalized MSE}
\end{subfigure}\hfill
\begin{subfigure}{0.3\linewidth}
  \centering
\includegraphics[width=0.9\linewidth]{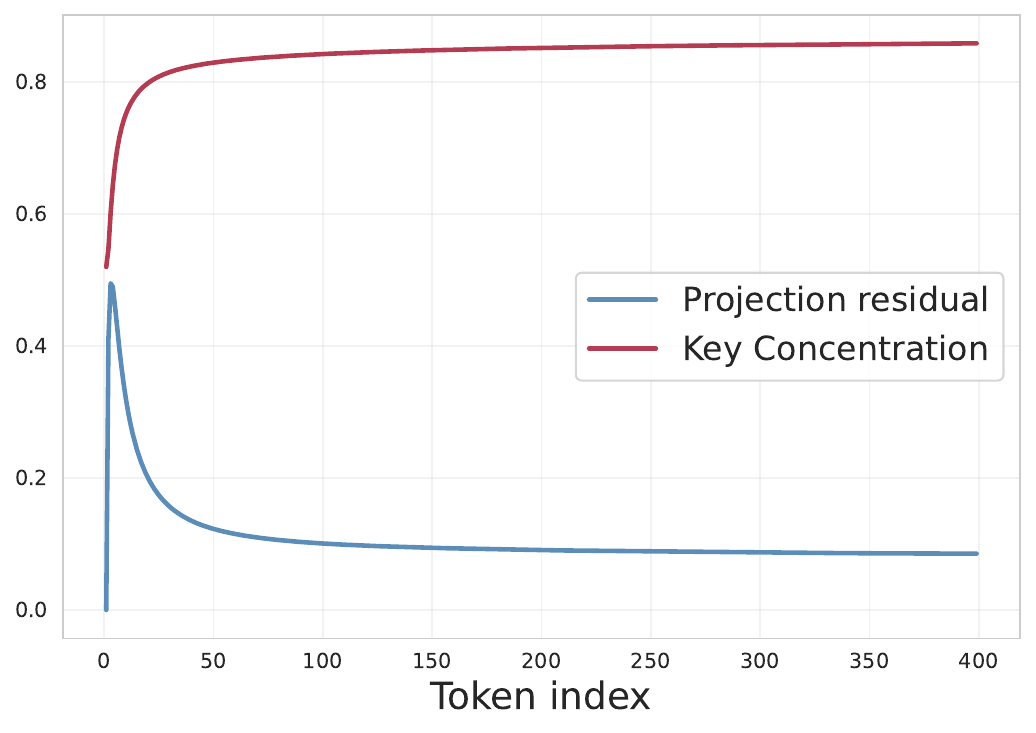} 
\vskip -5pt
    \caption{Token index}
\end{subfigure}
\vskip -5pt
\caption{Key Concentration and Projection Residual for LLama3.1-8b}\label{fig:lemmas}
\vskip -10pt
\end{figure}

\paragraph{Latency, Memory, and Approximation Error.}
Figure~\ref{fig:pure_mse_latecy}(a) shows latency and memory as a function of context size. Linear attention yields substantial memory savings, enabling evaluation up to 32k tokens, whereas \texttt{FA2} runs out of memory in our setup. Latency gains are modest below $\sim$8k tokens, consistent with optimized quadratic kernels remaining competitive at short contexts.

Figures~\ref{fig:pure_mse_latecy}(b--c) report the layerwise and tokenwise normalized MSE between the frozen full-attention outputs and the linearized replacements. GDN consistently achieves lower MSE than other linear kernels. As predicted by our first-order analysis, error is largest early in the sequence: the softmax Taylor expansion includes a uniform bias term $1/t$ that standard GDN/GLA implementations omit, making the first few timesteps harder to match when $t$ is small. 
While this is not identical to the \emph{attention sink} effect~\citep{xiao2023efficient}, it reflects a related observation: decoder-only LMs often place disproportionately high attention mass on the first tokens, producing sharp prefix attention patterns that are difficult for linear mechanisms to approximate. We refer to these prefix positions as \emph{sink tokens} throughout.

Finally, Key-Gated Linear Attention (green) has lower MSE \textit{at initialization} than GLA (orange), as shown by the dashed curves, but this gap largely disappears after training. This suggests lightweight fine-tuning mitigates the initialization advantage and that performance is ultimately capped by the diagonal gate structure. In contrast, the rank-1 update in GDN yields both better initialization and better final approximation. See App.~\ref{app:pure-linear-detailed} for full implementation details and Qwen3-8B results.

\paragraph{Assumption Checks: Residuals and Key Concentration.}
We empirically evaluate the assumptions used in Sec.~\ref{sec:delta-vs-gating} in Fig.~\ref{fig:lemmas} for the Llama3.1-8B base model. See App.~\ref{app:pure-linear-detailed} for full implementation details and Qwen3-8B results.

First, we report the \emph{projection residual} (Lemma~2; Eq.~\ref{eq:residual}), which quantifies the error incurred when replacing mean subtraction $(\tilde{k}_i-\bar{k}_t)$ with orthogonal projection $(I-\Pi_{\bar{k}_t})\tilde{k}_i$. As shown in Fig.~\ref{fig:lemmas}(a), larger residuals correlate with higher approximation error (normalized MSE). Fig.~\ref{fig:lemmas}(b) further shows that residual is largest at the beginning of the sequence, consistent with early tokens being hardest to approximate.
Second, we measure \emph{key concentration} (Lemma~3) by computing the running mean of normalized keys and taking the cosine similarity between each key and the (normalized) running mean direction. Higher concentration corresponds to keys aligning with a dominant direction, the regime in which the rank-1 product in Lemma~3 behaves like a structured projection. We again observe a favorable trend: lower concentration is associated with higher MSE, and the first tokens exhibit the lowest concentration.

\begin{table*}[t]
\centering
\vskip -10pt
\caption{Downstream performance of the linearized models without SWA and fully fixed Q, K, V. Averaged over 3 random seeds. See App.~\ref{app:pure-linear-detailed} for detailed results.}
\vskip -5pt
\label{tab:downstream_pure}
\resizebox{\linewidth}{!}{
\begin{tabular}{lcccccccc}
\toprule
 & \multicolumn{4}{c}{Llama3.1 8b} & \multicolumn{4}{c}{Qwen3 8b} \\
\cmidrule(lr){2-5} \cmidrule(lr){6-9}
 & Param & CR & MMLU & Lambada & Param & CR & MMLU & Lambada \\
\midrule  \rowcolor{lightgray}
Full Attention & --- & 73.98 & 65.25 & 74.79 & ---&  73.73 & 76.94 & 70.95 \\
GLA      & 4.3M  & 61.81 \stds{0.07} & 27.13 \stds{0.47} & \underline{28.26} \stds{0.11}  & 4.7M & 52.04 \stds{0.02} & 24.50 \stds{0.07} & 11.29 \stds{0.07}   \\
kGLA     & 8.5M &  \underline{62.60} \stds{0.02} & \underline{27.37} \stds{0.09} & 27.49 \stds{0.31}  & 9.6M  &  \underline{52.76} \stds{0.09} & \underline{25.23} \stds{0.23} & \underline{11.33} \stds{0.19} \\
GDN      & 8.5M  & \textbf{63.08} \stds{0.09} &\textbf{27.55} \stds{0.36} & \textbf{44.67} \stds{0.26}  & 9.6M  & \textbf{56.55} \stds{0.12} & \textbf{25.57} \stds{0.23} & \textbf{23.60} \stds{0.33} \\
Hedgehog & 10.6M &  36.13 \stds{0.02} & 24.95 \stds{0.03} & 0.00 \stds{0.00}  & 11.9M & 35.90 \stds{0.06} & 22.91 \stds{0.01} & 0.00 \stds{0.00} \\
\midrule
\multicolumn{2}{l}{\textbf{Gap with Full, \%}} & \textcolor{red}{10.9} & \textcolor{red}{37.7} & \textcolor{red}{30.12} & & \textcolor{red}{17.18} & \textcolor{red}{51.37} & \textcolor{red}{47.35}\\
\midrule
\bottomrule
\end{tabular}
}
\end{table*}

\paragraph{Downstream Evaluation.}
Table~\ref{tab:downstream_pure} reports zero-shot performance via LM-Eval~\citep{sutawika2025eleutherai} on five common reasoning benchmarks (CR): Physical Interaction QA (PIQA), ARC Easy, ARC Challenge, HellaSwag, WinoGrande. We also report 5-shot MMLU and Lambada (OpenAI) accuracy. 
Notably, the \emph{state-based} mechanisms (GLA/kGLA/GDN) substantially outperform the \emph{kernelized} Hedgehog replacement.
This separation without state-space models aligns with our first-order view: delta-style updates natively implement key-dependent rank-1 corrections that match the geometry induced by softmax, whereas purely kernelized feature maps must learn an adequate similarity representation under tight adaptation constraints.

To bridge the remaining gap, prior work typically introduces structural compensations like SWA~\citep{zhang2024lolcats, lan2025liger}, and LoRA over projection layers. In the next sections, we reintroduce these components in a controlled manner to isolate which interventions most effectively compensate for the residual approximation errors observed in the pure replacement setting.

\section{Improving Linear Attention Block Design}

\begin{table*}[t]
\centering
\vskip -5pt
\caption{Downstream performance of the linearized models: SWA and Sink Token, averaged over 3 random seeds. See App.~\ref{app:swa-sink-detailed} for detailed results.}
\vskip -5pt
\label{tab:downstream_pure_swa}
\resizebox{\linewidth}{!}{
\begin{tabular}{cclcccccccc}
\toprule
 &&& \multicolumn{4}{c}{Llama3.1 8b} & \multicolumn{4}{c}{Qwen3 8b} \\
\cmidrule(lr){4-7} \cmidrule(lr){8-11}
 &&& Param & CR & MMLU & Lambada & Param & CR & MMLU & Lambada \\
\midrule  \rowcolor{lightgray}
\multicolumn{3}{l}{Full Attention} & --- & 73.98 & 65.25 & 74.79 & ---&  73.73 & 76.94 & 70.95 \\
\multirow{3}{*}{\rotatebox[origin=c]{90}{\small{SWA}}}
&&GLA      & 4.3M & 68.52 \stds{0.07} & 39.50 \stds{0.60} & 36.19 \stds{0.14}& 4.7M & 71.08 \stds{0.01} & 48.10 \stds{0.51} & 52.55 \stds{0.21}    \\
&&kGLA     & 8.5M &  68.36 \stds{0.03} & 37.63 \stds{0.02} & 36.86 \stds{0.16}   & 9.6M  &  71.13 \stds{0.05} & 48.86 \stds{0.53} & 53.30 \stds{0.22} \\
&&GDN      & 8.5M & 68.73 \stds{0.03} & 51.57 \stds{0.77} & 41.92 \stds{0.22} & 9.6M  &  71.44 \stds{0.07} & 53.70 \stds{0.20} & 59.30 \stds{0.13}\\
\midrule
\multirow{3}{*}{\rotatebox[origin=c]{90}{\small{SWA}}}
&
\multirow{3}{*}{\rotatebox[origin=c]{90}{\small{ + Sink}}}
&GLA      & 4.3M & 73.07 \stds{0.01} & 44.25 \stds{0.08} & 59.31 \stds{0.11}   & 4.7M &  73.10 \stds{0.03} & 67.40 \stds{0.05} & 56.50 \stds{0.29}   \\
&&kGLA     & 8.5M &  73.05 \stds{0.02} & 43.28 \stds{0.03} & 59.57 \stds{0.19} & 9.6M  & 73.08 \stds{0.01} & 67.70 \stds{0.17} & 57.05 \stds{0.12} \\
&&GDN      & 8.5M &  73.81 \stds{0.00} & 58.88 \stds{0.14} & 68.82 \stds{0.13} & 9.6M  & 73.46 \stds{0.01} & 70.97 \stds{0.18} & 64.53 \stds{0.04}\\
\midrule
\multicolumn{4}{l}{\textbf{Gap with Full, \%}} &\textcolor{ForestGreen}{0.17} & \textcolor{red}{6.37} & \textcolor{red}{5.97} & & \textcolor{ForestGreen}{0.27} & \textcolor{red}{5.97} & \textcolor{red}{6.42}\\
\midrule
\bottomrule
\end{tabular}
}
\vskip -10pt
\end{table*}

\paragraph{Adding Sliding Window Attention.} 
Combining linear attention with sliding-window attention (SWA) has been shown to substantially improve downstream performance of linearized models~\citep{lan2025liger, zhang2024lolcats, mcdermott2025lola, van2025lizard}.
Let $w$ be the sliding-window size and let $y_t^{\text{swa}}$ denote the SWA output for token $t$. In Liger-GLA~\citep{lan2025liger}, the total output takes the form $y_t = 0.5\, q_t S_t + 0.5\, y_t^{\text{swa}}$.
Tokens inside the window ($i \in [t-w, t]$) contribute through both the linear and SWA paths, while tokens outside the window ($i \in [1, t-w)$) contribute only through the linear path.
Instead, we use:
\begin{equation}
    y_t = q_t S_{t-w} + y_t^{\text{swa}},
\end{equation}
so each token contributes either via the linear path or via SWA (no mixing weights). This is inspired by LoLCaTs~\citep{zhang2024lolcats}. Since we use a simpler form of linear attention, we do not need to change the normalization constant, and we can compute both components independently using standard kernels: \texttt{FA2}~\footnote{\texttt{Flash Attention 2~\cite{dao2023flashattention}}} for SWA and \texttt{FLA} for the LA part.

Our analysis also shows that the first few tokens are the hardest to approximate with linear attention. We therefore split a fixed SWA budget between $w$ local-window tokens and $s$ \textit{sink tokens}. Following~\cite{zhang2024lolcats}, we fix the budget to 64 tokens and use a 56-token sliding window with 8 sink tokens. The final output is:
\begin{align}
    y_t = q_t S_{s:t-w} + y_t^{\text{swa}},\quad
    y_t^{\text{swa}}= \frac{\sum_{i \in I} \exp(q_t^\top k_i / \sqrt{d})\, v_i}{\sum_{i \in I} \exp(q_t^\top k_i / \sqrt{d})}, \quad
    \small{I = \{1, \dots, s,\, t-w, \dots, t\}.}
\end{align}
Note that $y_t^{\text{swa}}$ can be computed efficiently with \texttt{FA2}, while the linear term can be computed with existing \texttt{FLA} kernels, enabling efficient training and inference.

We report the result for three linear kernels and two backbone model in Table~\ref{tab:downstream_pure_swa}. Consistent with the prior works, adding sliding window to the linear layer reduces the gap with the full attention that we observe in Table~\ref{tab:downstream_pure}. Furthermore, keeping the cache budget fixed to 64 tokens and allocating 8 of them to the sink instead of the sliding window, improved performance even further. This result is consistent across linear attention types and backbone architectures.

\paragraph{Adapting Q/K/V.}
Performance can be improved further by allowing modifications to the original model’s Q, K, and V projections, for which we consider LoRA and short convolutions. We keep all hyperparameters fixed, except reducing the LoRA learning rate to $5\text{e-}4$ due to training instability.

We report results in Table~\ref{tab:downstream_sc_lora_swa}. We find that the gap between models with and without the delta rule almost disappears when we introduce short convolutions. A similar phenomenon was observed in~\cite{allen2025physics}, where a GLA model trained from scratch was able to catch up with GDN after adding short convolutions. For most benchmarks, short convolutions outperform LoRA. However, this may be due to the lower learning rate noted above, and we hypothesize that LoRA could benefit from longer training (e.g., 20M tokens, as used in prior work~\citep{lan2025liger, zhang2024lolcats}).

A risk of this setup is that the additional representational capacity introduced by short convolutions and LoRA increases the likelihood of forgetting or overfitting to the training data. This becomes evident when comparing MMLU accuracy, which is more knowledge-based, versus LAMBADA, which is context-based. MMLU accuracy benefits little from short convolutions or LoRA, and the accuracy gap between full and linear models remains around 6\%. For LAMBADA, by contrast, the gap shrinks, indicating that additional capacity helps the model handle context more effectively.

\begin{table*}[t]
\centering
\vskip -10pt
\caption{Downstream performance of the linearized models: LoRA or Short Convolution (SC), averaged over 3 random seeds. See App.~\ref{app:sc-lora-detailed} for detailed results.}
\vskip -5pt
\label{tab:downstream_sc_lora_swa}
\resizebox{\linewidth}{!}{
\begin{tabular}{clcccccccc}
\toprule
 && \multicolumn{4}{c}{Llama3.1 8b} & \multicolumn{4}{c}{Qwen3 8b} \\
\cmidrule(lr){3-6} \cmidrule(lr){7-10}
 && Param & CR & MMLU & Lambada & Param & CR & MMLU & Lambada \\
\midrule  \rowcolor{lightgray}
&Full Attention & --- & 73.98 & 65.25 & 74.79 & ---&  73.73 & 76.94 & 70.95 \\
\multirow{3}{*}{\rotatebox[origin=c]{90}{\small{SC}}}
&GLA      &  6.1M & 73.82 \stds{0.01} & 59.13 \stds{0.18} & 68.70 \stds{0.12} &6.9M & 73.52 \stds{0.02} & 70.58 \stds{0.17} & 64.20 \stds{0.05}  \\
&kGLA     & 10.3M &  73.80 \stds{0.00} & 59.02 \stds{0.12} & 68.65 \stds{0.11} & 11.6M &  73.51 \stds{0.03} & 70.75 \stds{0.17} & 64.20 \stds{0.06} \\
&GDN      & 10.3M & 73.77 \stds{0.02} & 59.17 \stds{0.17} & 69.01 \stds{0.03}  & 11.6M & 73.49 \stds{0.00} & 71.10 \stds{0.13} & 64.64 \stds{0.12} \\
\midrule
\multirow{3}{*}{\rotatebox[origin=c]{90}{\small{LoRA}}}
&GLA      & 11.1M & 71.71 \stds{0.09} & 50.10 \stds{0.59} & 67.51 \stds{1.16}  & 12.5M&  74.67 \stds{0.20} & 65.99 \stds{0.27} & 66.65 \stds{0.35}  \\
&kGLA     & 15.3M & 71.65 \stds{0.24} & 49.95 \stds{0.39} & 67.79 \stds{0.60}  & 17.3M &  74.97 \stds{0.08} & 66.29 \stds{0.30} & 65.48 \stds{0.51}\\
&GDN      & 15.3M & 72.80 \stds{0.25} & 54.80 \stds{2.79} & 70.29 \stds{1.96}  & 17.3M & 75.17 \stds{0.21} & 69.61 \stds{0.19} & 67.94 \stds{0.77}\\
\midrule
\multicolumn{3}{l}{\textbf{Gap with Full, \%}} & \textcolor{ForestGreen}{0.16} & \textcolor{red}{6.08} & \textcolor{ForestGreen}{4.50} & & \textcolor{ForestGreen}{-1.44} & \textcolor{red}{5.84} & \textcolor{ForestGreen}{3.01}\\
\midrule
\bottomrule
\end{tabular}
}
\vskip -10pt
\end{table*}

\section{Final Evaluation}

In the previous section, we established that GDN consistently outperforms other linear variants. While prior work shows SWA is essential for recovering performance, our analysis (Sec.\ref{sec:delta-vs-gating}) revealed that linear mechanisms fundamentally struggle to approximate the attention patterns of early sequence tokens. Finally, short convolutions are added to enhance representational capacity.
We now evaluate this final setup further, comparing it to other linearization methods, adding long context benchmarks, other models sizes and testing its performance in hybrid setting.
 
\paragraph{Common Reasoning.}
Table~\ref{tab:res_llama31} reports our final results and compares against prior linearization methods. Under the same 64-token sliding-window budget used by prior work, our linearized model matches or exceeds the best existing linearization baselines on the common reasoning suite, while requiring substantially fewer additional training tokens. In particular, we outperform LoLCaT~\citep{zhang2024lolcats} and Liger-GLA~\cite{lan2025liger} on 5-shot MMLU, yielding a noticeably smaller gap to the full-attention teacher. Finally, our approach is competitive with Llamba~\cite{bick2025llamba} on common reasoning despite Llamba relying on full-model finetuning over a much larger token budget (12B), highlighting the strength of delta-style updates plus minimal compensations in the post hoc setting.

Several recent linearization works further improve performance by augmenting the caching strategy beyond a fixed 64-token sliding window, typically by retaining additional globally important tokens. Lizard~\citep{van2025lizard} uses 128 cache tokens and 4 meta, while LoLA~\citep{mcdermott2025lola} and STILL~\citep{meng2026still} select additional 64 cache tokens with content-based criteria. These mechanisms are complementary to our approach: even without specialized selection, simply matching their total cache budget by enlarging the sliding window already yields stronger 5-shot MMLU (Table~\ref{tab:res_llama31}). We expect that combining GDN-based linearization with content-aware token selection could further improve performance at a fixed budget.

\begin{table*}[t]
\centering
\vskip -10pt
\caption{LLama3.1-8b Linearization results comparison with previous works.}
\vskip -5pt
\label{tab:res_llama31}
\resizebox{\linewidth}{!}{
\begin{tabular}{lcccllllll|l}
\toprule
Model &
Tok. \small{(B)} &
Cache Size &
Param. &
PIQA &
ARC-e &
ARC-c &
Hella. &
Wino. &
MMLU &
Avg. 

\\
\midrule \rowcolor{lightgray}
LLaMA 3.1 8B & - & $\infty$ & - & 79.05 & 82.15 & 54.78 & 79.34 & 74.59 & 65.25 & 72.52 \\
\multicolumn{11}{l}{\textbf{Linearized}} \\
Llamba  & 12  & 0 & 8B  & {80.9} & {82.5} & 54.6 & 77.6 & 73.3 &{60.0} & 71.5 \\ 
LolCaT   &0.04 & 64 &  & {80.96} & {82.37 }& 54.44 & {79.07} & 69.69 & 54.88 & 70.24 \\ 
Liger-GLA &  0.02& 64 &  & 80.5  & 81.8  & {55.6}  & 75.4  & 69.5  & 46.9  & 68.3 \\ 
\textsc{Ours (GDN)} & 0.01 & 64 & 10.3M  &79.07 \stds{0.02}	&81.97 \stds{0.01}&	54.92 \stds{0.08}&	78.79 \stds{0.07}&	74.11 \stds{0.00}&	59.17 \stds{0.17} & 71.34 \stds{0.05}\\
\midrule
\multicolumn{11}{l}{\textbf{Extended Cache budget}} \\
Lizard & 0.04 & 132 &  & 82& 83.5 &56.7& 79.3& 71.7& 61.2& 74.6\\ 
LoLa   & 0.04 & 128 &  & 81.6& 82.5& 55.4& 79.8& 73.6& 57.6 & 71.75 \\
STILL (concurrent)  & 0.04 & NA &  & {81.3} & {83.0} & {56.7} & {79.0} & {73.4} & {61.3} & {72.5} \\
\textsc{Ours (GDN)} & 0.01 & 128 & 10.3M  & 79.16 \stds{0.00} & 82.15 \stds{0.00} & 54.86 \stds{0.00} & 79.34 \stds{0.00} & 74.11 \stds{0.00} & 63.22 \stds{0.13} & 72.14 \stds{0.02} \\
\midrule
\bottomrule
\end{tabular}
}
\end{table*}

\paragraph{Long Context.} To evaluate long-context capabilities, we use S-NIAH and an extended RULER benchmarks (App.~\ref{tab:ruler_extended_4k}). 
Under the fair 128-window comparison, we outperform prior linearization baselines, and with a larger cache budget it becomes competitive with strong caching-based methods.

\paragraph{Scalability.} To assess the scalability of the linearization approach to models of various sizes, we apply it to the full line of dense Qwen3 models from 0.6B to 32B parameters. We use Base models, except for the largest 32B version, for which only an instruction-tuned variant is available. The results in Figure~\ref{fig:qwen_scaling} show that the linearized performance scales consistently with the original base model.

\paragraph{Hybrid Model.} We evaluate a hybrid setting where only a subset of layers retain full attention and the remaining layers are replaced with GDN (Fig.~\ref{fig:hybrid_res}). We select the retained full-attention layers either (i) uniformly across depth or (ii) using our hardness signals (key concentration and projection residual) to preferentially keep the hardest-to-linearize layers. We consider keeping 10\% - 50\% of layers; for LLaMA-3.1-8B (32 layers), this corresponds to retaining 3 - 16 full attention layers. When only a small fraction of layers is kept (10, 20\%), the metrics-based selection improves Lambada accuracy compared to uniform selection (Fig.~\ref{fig:hybrid_res}), indicating that preserving a few hard layers is particularly beneficial for context-dependent evaluation.

\begin{table}[t]
    \centering
    \vskip -5pt
    \begin{tabular}{ccc}
        \hspace{-12pt}
        \begin{minipage}[t]{0.59\textwidth}
            \hspace{-15pt} 
            \centering
            \includegraphics[width=1\linewidth]{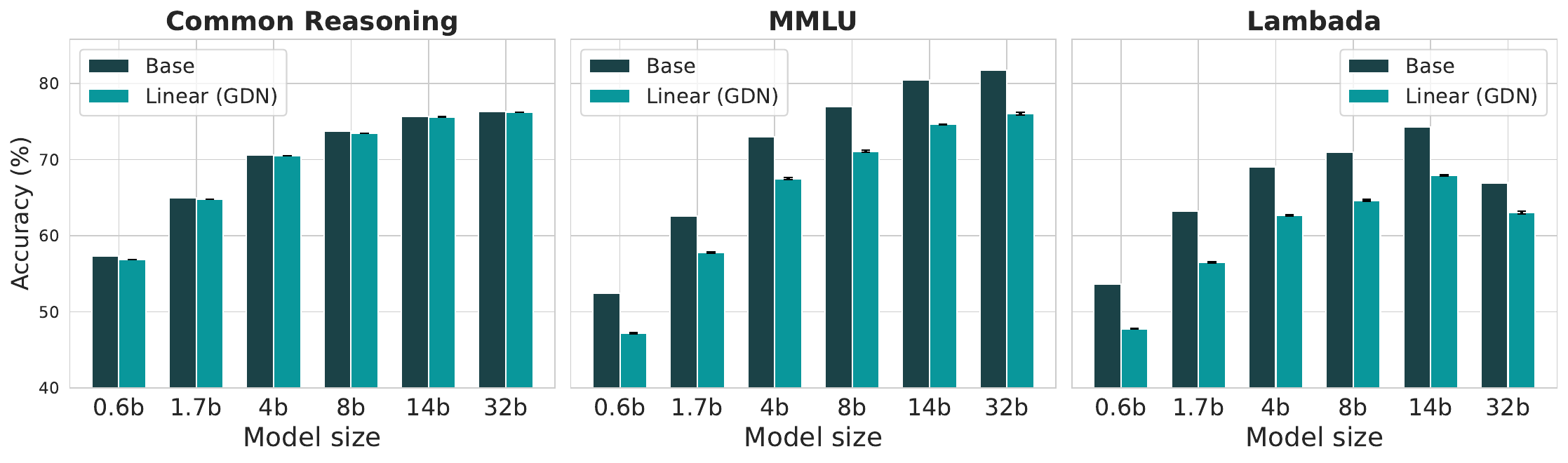} 
            \captionof{figure}{Downstream accuracies for linearized Qwen3 models of various sizes. See App.~\ref{app:qwen-scaling-detailed} for detailed results.} \label{fig:qwen_scaling}
        \end{minipage}&
        \begin{minipage}[t]{0.4\textwidth}
            \hspace{-10pt} 
            \centering
             \includegraphics[width=1\linewidth]{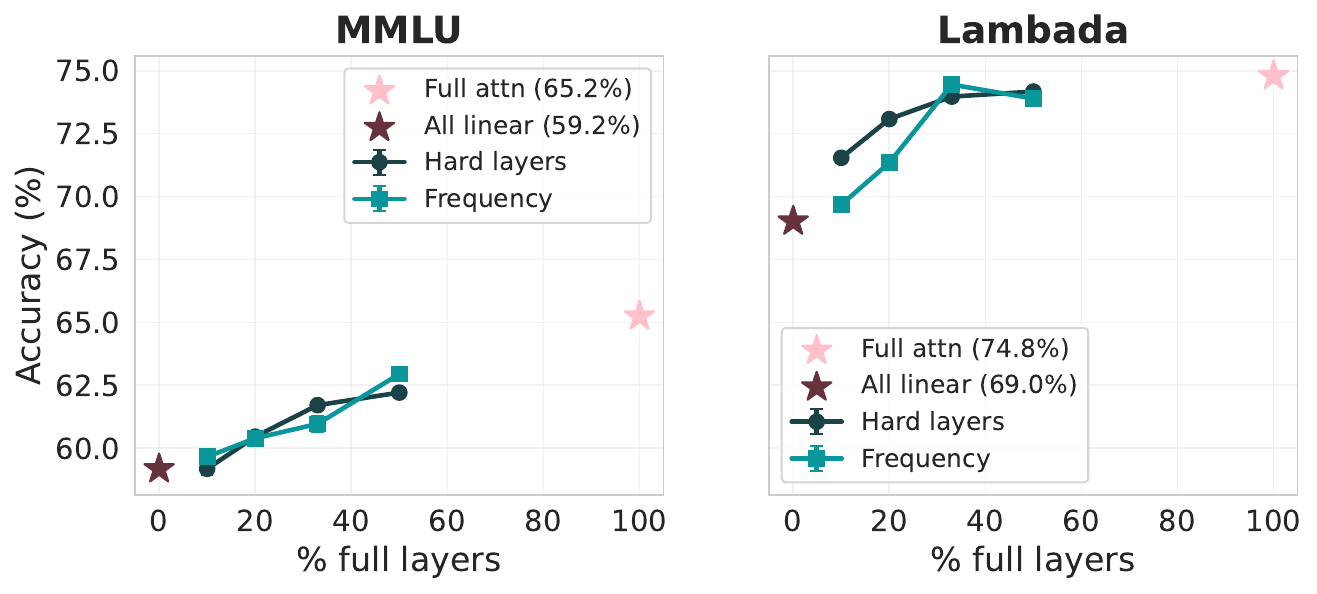} 
            \captionof{figure}{Hybrid setting: keeping full attention for 10\% to 50\% layers .}\label{fig:hybrid_res}
        \end{minipage} &
    \end{tabular}
    \vskip -15pt
\end{table}

\section{Conclusion and Limitations}
We study post hoc transformer linearization starting from a strict regime, and progressively relax it to quantify the effect of additional components (SWA, a small “sink-token” budget, and Short Convolutions). In the strict setting, delta-style updates are the most performant replacement, aligning with our first-order view that softmax induces key-dependent rank-1 corrections. With the final recipe, we match full attention model on common reasoning benchmarks, outperform prior linearization baselines on MMLU, and match the long-context performance of complex adaptive-caching frameworks.
\textbf{Limitations:}  Our first-order theory relies on approximate assumptions that may not hold uniformly across layers or very short prefixes. We leave the extension of adaptive caching strategies to GDN linearization to future works.

\newpage
\bibliographystyle{plain}
\bibliography{biblio}
\newpage
\appendix
\section{Proofs}\label{app:proofs}

\subsection{Linear Approximation}\label{app:proof1}
\begin{lemma}[First--order Taylor approximation of softmax]
Let 
$p(x) = \mathrm{softmax}(x)$,
$x_i = q_t^\top k_i$, and 
$\bar{x} = \frac{1}{t}\sum_{j=1}^t x_j .$
Then for each index $i$,
\[
p_i(x)
=
\frac{1}{t}
+
\frac{1}{t}\bigl(x_i - \bar{x}\bigr)
+
\varepsilon_i^{(1)}(x),
\]
where the error term satisfies
\[
\|\varepsilon^{(1)}(x)\|_\infty \le \frac{1}{2}\|x\|_2^2 .
\]
\end{lemma}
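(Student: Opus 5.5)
The plan is to expand each coordinate of the softmax along the ray from the origin using Taylor's theorem with integral remainder. Fix $x\in\mathbb{R}^t$ and an index $i$, and set $g(s)=p_i(sx)$ for $s\in[0,1]$. Then
\[
p_i(x)=g(1)=g(0)+g'(0)+\int_0^1(1-s)\,g''(s)\,ds ,
\]
and I will identify the first two terms with $\tfrac1t+\tfrac1t(x_i-\bar x)$. The integral is then exactly $\varepsilon^{(1)}_i(x)$, so it suffices to show $|g''(s)|\le\|x\|_2^2$ for all $s\in[0,1]$. Since $\int_0^1(1-s)\,ds=\tfrac12$, this gives $|\varepsilon^{(1)}_i(x)|\le\tfrac12\|x\|_2^2$ for every $i$, which is the claimed $\ell_\infty$ bound.

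\textbf{Derivatives along the ray.} Write $p=p(sx)$ and $m(s)=\langle p(sx),x\rangle$. The standard identity $\partial p_i/\partial x_j=p_i(\delta_{ij}-p_j)$ gives
\[
g'(s)=p_i\,(x_i-m).
\]
At $s=0$ we have $p=\tfrac1t\mathbf 1$ and $m=\bar x$, so $g(0)=\tfrac1t$ and $g'(0)=\tfrac1t(x_i-\bar x)$, matching the statement. Differentiating once more, and using $m'(s)=\sum_j p_j x_j^2-m^2=:V(s)$ (the variance of $x$ under $p$), I get
\[
g''(s)=p_i\big[(x_i-m)^2-V(s)\big].
\]

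\textbf{Bounding $g''$.} This is the main step. Both $(x_i-m)^2$ and $V$ are nonnegative, so $|g''|\le\max\{p_i(x_i-m)^2,\;p_iV\}$, and I bound the two terms separately.

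For the variance term, $p_iV\le V\le\sum_j p_jx_j^2\le\|x\|_\infty^2\le\|x\|_2^2$.

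For the other term, a crude bound on $|x_i-m|$ loses a factor of up to $4$, so I use a weighted-difference representation instead. Write $x_i-m=\sum_{j\ne i}p_j(x_i-x_j)$ and apply Jensen (or Cauchy--Schwarz) with total weight $1-p_i$:
\[
(x_i-m)^2\le(1-p_i)\sum_{j\ne i}p_j(x_i-x_j)^2 .
\]
For $j\ne i$ we have $(x_i-x_j)^2\le 2(x_i^2+x_j^2)\le 2\|x\|_2^2$, since $x_i^2+x_j^2\le\|x\|_2^2$ when $i\ne j$. The remaining weights satisfy $\sum_{j\ne i}p_j=1-p_i$, so
\[
p_i(x_i-m)^2\le 2\,p_i(1-p_i)^2\,\|x\|_2^2\le\tfrac12\|x\|_2^2 ,
\]
because $p(1-p)^2\le p(1-p)\le\tfrac14$.

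Hence $|g''(s)|\le\|x\|_2^2$ for all $s\in[0,1]$. Integrating against $(1-s)$ gives $|\varepsilon^{(1)}_i(x)|\le\tfrac12\|x\|_2^2$ for every $i$, and taking the maximum over $i$ completes the proof.
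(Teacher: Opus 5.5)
Your proof is correct. Its skeleton is the same as the paper's: a first-order Taylor expansion of softmax at $x=0$, where the Jacobian $p_i(\delta_{ij}-p_j)$ evaluated at $p=\frac1t\mathbf{1}$ produces the term $\frac1t(x_i-\bar x)$.

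Where you differ is the remainder. The paper only asserts $\|R(x)\|_\infty\le\frac12\|x\|_2^2$ ``because softmax is smooth''. Smoothness alone gives an $O(\|x\|_2^2)$ bound with an unspecified constant, so the stated constant is not justified there.

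You instead restrict to the ray $s\mapsto sx$ and write the remainder in integral form. You then compute $g''(s)=p_i\bigl[(x_i-m)^2-V(s)\bigr]$ explicitly and show $|g''(s)|\le\|x\|_2^2$. The weighted Cauchy--Schwarz step is what makes this work: it yields the factor $p_i(1-p_i)^2\le\frac14$ and so prevents a crude bound on $|x_i-m|$ from overshooting.

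Your argument therefore actually establishes the constant $\frac12$, which the paper's proof leaves unproved, at the cost of only a few lines of calculus.
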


\begin{proof}
The derivative of softmax is given by the Jacobian
\[
J_{ij}(x) = p_i(x)(\delta_{ij}-p_j(x)).
\]
We evaluate softmax at the zero vector $x=0$.
At this point all entries are equal, so
\[
p_i(0)=\frac{1}{t}.
\]
Substituting into the Jacobian gives
\[
J_{ij}(0)=\frac{1}{t}\left(\delta_{ij}-\frac{1}{t}\right).
\]

We apply the first--order Taylor expansion around $x=0$:
\[
p(x)=p(0)+J(0)x+R(x),
\]
where $R(x)$ collects higher--order terms.

Since $p(0)=\frac{1}{t}\mathbf{1}$ and
\[
J(0)x=\frac{1}{t}\left(x-\bar{x}\mathbf{1}\right),
\]
we obtain
\[
p(x)
=
\frac{1}{t}\mathbf{1}
+
\frac{1}{t}\left(x-\bar{x}\mathbf{1}\right)
+
R(x).
\]

Because softmax is smooth, the remainder is bounded by
\[
\|R(x)\|_\infty \le \frac{1}{2}\|x\|_2^2 .
\]
\end{proof}

Substituting $x_i = q_t^\top k_i$ gives
\[
p_i(x)
\approx
\frac{1}{t}
+
\frac{1}{t}q_t^\top (k_i-\bar{k}).
\]

\subsection{Projection}\label{app:proof2}
\begin{lemma}[Projection approximation of centering]
Define the mean key and the projection matrix
\[
\bar{k}=\frac{1}{t}\sum_{j=1}^t k_j,
\qquad
\Pi_{\bar{k}}=\frac{\bar{k}\bar{k}^\top}{\|\bar{k}\|^2}.
\]
Assume all keys are normalized  $\|k_i\|=1 \quad \text{for all } i$. Then
\[
k_i-\bar{k}=(I-\Pi_{\bar{k}})k_i+r_i,
\]
where the residual $r_i$ satisfies
\[
\|r_i\|
\le
|\cos(\alpha_i)-1| + |1-\|\bar{k}\||,
\]
and $\alpha_i$ is the angle between $k_i$ and $\bar{k}$.
\end{lemma}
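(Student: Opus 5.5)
The plan is to compute the residual $r_i$ in closed form rather than estimate it, since the identity in the statement pins it down exactly. Rearranging,
\[
r_i = (k_i - \bar{k}) - (I-\Pi_{\bar{k}})k_i = \Pi_{\bar{k}}k_i - \bar{k}.
\]
So $r_i$ is the difference between the projection of $k_i$ onto the mean direction and the mean itself. Both vectors are parallel to $\bar{k}$, so $r_i$ is a scalar multiple of $\bar{k}$, and the whole argument reduces to bounding one scalar coefficient. Throughout I will assume $\bar{k}\neq 0$; this is needed for $\Pi_{\bar{k}}$ to be defined and is implicit in the statement.

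First I evaluate $\Pi_{\bar{k}}k_i$ using the normalization $\|k_i\|=1$. Writing $\bar{k}^\top k_i = \|\bar{k}\|\,\|k_i\|\cos\alpha_i = \|\bar{k}\|\cos\alpha_i$ gives
\[
\Pi_{\bar{k}}k_i = \frac{\bar{k}^\top k_i}{\|\bar{k}\|^2}\,\bar{k} = \frac{\cos\alpha_i}{\|\bar{k}\|}\,\bar{k}.
\]
Therefore
\[
r_i = \Bigl(\frac{\cos\alpha_i}{\|\bar{k}\|}-1\Bigr)\bar{k},
\qquad
\|r_i\| = \bigl|\cos\alpha_i - \|\bar{k}\|\bigr|.
\]
This is in fact an exact formula for the residual norm.

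Next I insert $\pm 1$ and apply the triangle inequality:
\[
\bigl|\cos\alpha_i - \|\bar{k}\|\bigr| \le |\cos\alpha_i - 1| + \bigl|1-\|\bar{k}\|\bigr|,
\]
which is exactly the claimed bound. I will remark that the two terms separate the two effects described after the lemma: misalignment of $k_i$ with the mean direction, and dispersion of the keys, which shrinks $\|\bar{k}\|$ below $1$ because $\|\bar{k}\|\le 1$ by convexity.

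There is no real obstacle here. The only point needing care is the degenerate case $\bar{k}=0$, where $\Pi_{\bar{k}}$ and $\alpha_i$ are undefined. I would handle it by stating $\bar{k}\neq 0$ explicitly as a standing assumption.
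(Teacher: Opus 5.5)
Your proposal is correct and matches the paper's proof essentially step for step. Both write $r_i = \Pi_{\bar{k}}k_i - \bar{k}$, use $\|k_i\|=1$ to obtain the exact norm $\bigl|\cos\alpha_i - \|\bar{k}\|\bigr|$, and finish with the triangle inequality after inserting $\pm 1$; your explicit standing assumption $\bar{k}\neq 0$ is a worthwhile addition that the paper leaves implicit.
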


\begin{proof}
The matrix $\Pi_{\bar{k}}$ projects any vector onto the direction of
$\bar{k}$.
Applying it to $k_i$ gives
\[
\Pi_{\bar{k}}k_i
=
\bar{k}\frac{\langle k_i,\bar{k}\rangle}{\|\bar{k}\|^2}.
\]

We define the residual as the difference between exact centering and projection:
\[
r_i
=
(k_i-\bar{k})-(I-\Pi_{\bar{k}})k_i
=
\Pi_{\bar{k}}k_i-\bar{k}.
\]

Substituting the projection formula,
\[
r_i
=
\bar{k}\left(
\frac{\langle k_i,\bar{k}\rangle}{\|\bar{k}\|^2}-1
\right).
\]

Because $k_i$ is a unit vector, the inner product can be written as
\[
\langle k_i,\bar{k}\rangle=\|\bar{k}\|\cos(\alpha_i).
\]
Thus
\[
r_i=\bar{k}\left(\frac{\cos(\alpha_i)}{\|\bar{k}\|}-1\right).
\]

Taking norms yields
\[
\|r_i\|
=
\bigl|\cos(\alpha_i)-\|\bar{k}\|\bigr|.
\]
Finally, we rewrite the difference as a sum:
\[
\cos(\alpha_i)-\|\bar{k}\|
=
(\cos(\alpha_i)-1) + (1-\|\bar{k}\|).
\]
Taking absolute values and using the triangular inequality
\[
|a+b| \le |a| + |b|,
\]
which holds for all real numbers $a,b$, we obtain
\[
\|r_i\|
\le
|\cos(\alpha_i)-1| + |1-\|\bar{k}\||.
\]
\end{proof}

\subsection{Rank-1 approximation}\label{app:proof3}

\begin{lemma}[Rank--1 product approximation]
Let $\|k_j\|=1$ and define
\[
D_j = I - \beta_j k_j k_j^\top,
\qquad
\beta_j \in (0,1),
\qquad
M_t = \prod_{j=1}^t D_j .
\]

Then
\[
M_t
=
I
-
\sum_{j=1}^t \beta_j k_j k_j^\top
+
E_t,
\qquad
\|E_t\|_2 \le C\, t (\max_j \beta_j)^2.
\]

If the vectors $k_j$ are concentrated around their mean
$\bar{k}=\frac{1}{t}\sum_{j=1}^t k_j$, then
\[
M_t \approx I - c\,\Pi_{\bar{k}},
\quad
\Pi_{\bar{k}}=\frac{\bar{k}\bar{k}^\top}{\|\bar{k}\|^2},
\]
for some scalar $c \ge 0$.
\end{lemma}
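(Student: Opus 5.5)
The plan is to expand the product $M_t=\prod_{j=1}^t D_j$ multilinearly and track the terms of degree $\ge 2$ in the $\beta$'s through a recursion rather than a full subset expansion. Set $S_t=\sum_{j\le t}\beta_j k_jk_j^\top$ and define $E_t := M_t - I + S_t$, with $E_0=0$. From $M_t = M_{t-1}D_t = M_{t-1}-\beta_t M_{t-1}k_tk_t^\top$ I get the exact recursion
\[
E_t = E_{t-1} - \beta_t\,(M_{t-1}-I)\,k_tk_t^\top .
\]
Since $\|k_tk_t^\top\|_2=1$, this gives $\|E_t\|_2\le\|E_{t-1}\|_2+\beta_t\|M_{t-1}-I\|_2$. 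So the whole bound reduces to controlling $\|M_{t-1}-I\|_2$.

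For that I would use two facts. First, each $D_j$ is symmetric with eigenvalues $1$ and $1-\beta_j\in(0,1)$, so $\|D_j\|_2\le 1$. This gives $\|M_{j}\|_2\le 1$ and hence $\|M_j-I\|_2\le 2$. Second, telescoping gives $M_j-I=\sum_{l\le j}M_{l-1}(D_l-I)$, so $\|M_j-I\|_2\le\sum_{l\le j}\beta_l\le j\beta_{\max}$. Combining the two, with $\beta:=\max_j\beta_j$,
\[
\|E_t\|_2\le\sum_{j=1}^{t}\beta\min\{2,(j-1)\beta\}\le \min\Bigl\{\tfrac12 t^2\beta^2,\;2t\beta\Bigr\}.
\]
This is second order in $\beta$, as claimed.

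The main obstacle is reconciling this with the stated rate $C\,t\,\beta^2$ with $C$ independent of $t$. The recursion naturally yields $t^2\beta^2$, and in general that is sharp: with all $k_j$ equal, the error is $(1-\beta)^t-1+t\beta\approx\binom t2\beta^2$. I would therefore prove the statement in the form $\|E_t\|_2\le C\,t\beta^2$ with $C=\tfrac12 t$. Equivalently, I would prove it with a constant $C$ that is absolute in the regime $t\beta=O(1)$, which is the regime where a first-order expansion is meaningful anyway. I would flag that a $t$-independent $C$ needs extra structure, such as near-orthogonality of the keys, which the concentration assumption in the second part actually works against.

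For the second claim, I would write each key as $k_j=u+\delta_j$ with $u=\bar k/\|\bar k\|$, where concentration means $\max_j\|\delta_j\|=:\eta$ is small. Then $k_jk_j^\top=uu^\top+u\delta_j^\top+\delta_ju^\top+\delta_j\delta_j^\top$, which gives
\[
S_t=\Bigl(\sum_j\beta_j\Bigr)\Pi_{\bar k}+O\Bigl(\sum_j\beta_j(2\eta+\eta^2)\Bigr).
\]
Inserting this into the first part yields $M_t=I-c\,\Pi_{\bar k}+O\bigl(t\beta\eta+t^2\beta^2\bigr)$ with $c=\sum_j\beta_j\ge0$. The statement's "$\approx$" is then made precise as this explicit error bound.

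As an optional sharper variant, if all $k_j=u$ exactly, then $M_t=I-\bigl(1-\prod_j(1-\beta_j)\bigr)uu^\top$ holds with no error at all. In that case one can take $c=1-\prod_j(1-\beta_j)\in[0,1]$, and the perturbative argument above applies around this choice of $c$ as well.
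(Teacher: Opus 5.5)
Your route differs from the paper's and is the more reliable one. The paper multiplies out the product, bounds each order-$m$ term by $\beta^m$, notes there are $O(t^m)$ of them, and then writes $\|E_t\|_2\le C\sum_{m\ge2}t\beta^m$. That last step silently replaces $t^m$ by $t$. Honest counting only gives $\sum_{m\ge2}\binom{t}{m}\beta^m=(1+\beta)^t-1-t\beta$, which is about $\tfrac12t^2\beta^2$ for small $t\beta$ and grows exponentially otherwise.

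Your recursion $E_t=E_{t-1}-\beta_t(M_{t-1}-I)k_tk_t^\top$, combined with $\|D_j\|_2\le1$ (a fact the paper never uses), gives the uniformly bounded estimate $\|E_t\|_2\le\min\{\tfrac12t^2\beta^2,2t\beta\}$; in fact $\|E_t\|_2\le\tfrac12(\sum_j\beta_j)^2$. Your aligned-keys computation $\|E_t\|_2=(1-\beta)^t-1+t\beta$ shows two things: the $t^2$ is sharp, and the stated bound with a $t$-independent $C$ is false. Flagging it rather than trying to prove it is the right call.

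For the second part, the paper is heuristic: it uses $k_jk_j^\top\approx\cos^2(\alpha_j)\bar k\bar k^\top$ and $c=\sum_j\beta_j\cos^2\alpha_j$, dropping the cross terms. You instead obtain an explicit error $O(t\beta\eta+t^2\beta^2)$ with $c=\sum_j\beta_j$. Since $\cos^2\alpha_j=1-O(\eta^2)$, the two choices of $c$ agree within your error.

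There are two things to fix.

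First, the sentence claiming that, equivalently, $C$ is absolute in the regime $t\beta=O(1)$ is wrong. In that regime your bound $\tfrac12(t\beta)^2$ is $O(t\beta)$, not $O(t\beta^2)$. Your own example with $t\beta=1$ and $\beta\to0$ gives $\|E_t\|_2\to e^{-1}$, while $Ct\beta^2=C\beta\to0$. The correct reformulation is that $E_t$ is second order in $\sum_j\beta_j$, not in $\beta$.

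Second, the claim that the perturbative argument ``applies around'' $c=1-\prod_j(1-\beta_j)$ is not covered by the argument you gave. That argument goes through $S_t$ and so always carries the $t^2\beta^2$ term. Instead, telescope the two products directly. Since every $D_j$ and every $I-\beta_juu^\top$ is a contraction,
\[
\Bigl\|M_t-\prod_j(I-\beta_juu^\top)\Bigr\|_2\le\sum_j\beta_j\|k_jk_j^\top-uu^\top\|_2\le(2\eta+\eta^2)\sum_j\beta_j .
\]
Moreover, $\prod_j(I-\beta_juu^\top)=I-\bigl(1-\prod_j(1-\beta_j)\bigr)\Pi_{\bar k}$ because $uu^\top$ is idempotent. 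This version has no $t^2\beta^2$ term and stays meaningful when $t\beta$ is not small.
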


\begin{proof}
We expand
\[
M_t
=
\prod_{j=1}^t (I-\beta_j k_j k_j^\top).
\]

Multiplying out yields
\[
\begin{aligned}
M_t
&=
I
-
\sum_{j=1}^t \beta_j k_j k_j^\top \\
&\quad
+
\sum_{1 \le j < \ell \le t}
\beta_j\beta_\ell
(k_j k_j^\top)(k_\ell k_\ell^\top)
+ \cdots .
\end{aligned}
\]

All terms of order two or higher are collected in $E_t$.

Each matrix $k_j k_j^\top$ satisfies
\[
\|k_j k_j^\top\|_2 \le \|k_j\|^2 = 1.
\]
Using
\[
\|AB\|_2 \le \|A\|_2\|B\|_2,
\]
every order $m\ge2$ term is bounded by $(\max_j\beta_j)^m$.
There are at most $O(t^m)$ such terms, hence
\[
\|E_t\|_2
\le
C \sum_{m=2}^\infty t(\max_j\beta_j)^m
\le
C\,t(\max_j\beta_j)^2.
\]

If the vectors $k_j$ are close in direction to their mean $\bar{k}$, then
\[
k_j k_j^\top \approx \cos^2(\alpha_j)\,\bar{k}\bar{k}^\top,
\]
where $\alpha_j$ is the angle between $k_j$ and $\bar{k}$.
Thus,
\[
\sum_{j=1}^t \beta_j k_j k_j^\top
\approx
\left(\sum_{j=1}^t \beta_j \cos^2(\alpha_j)\right)
\bar{k}\bar{k}^\top.
\]

Defining
\[
c = \sum_{j=1}^t \beta_j \cos^2(\alpha_j),
\]
with $c \ge 0$, we obtain
\[
M_t \approx I - c\,\Pi_{\bar{k}} .
\]
\end{proof}

\newpage
\section{Detailed Results}\label{app:detailed_red}

Here we report detailed accuracies for all the benchmarks and additional experimental evaluations.

\subsection{Pure Linear Attention}\label{app:pure-linear-detailed}

\begin{table*}[h]
\centering
\caption{Downstream performance of the linearized models without SWA and without LoRA.  We report the accuracy on common reasoning benchmarks (PiQA, ARC-e, ARC-c, HellaSwag, WinoGrande), 5-shot MMLU and Lambada (OpenAI). For ARC-c and HellaSwag we use normalized accuracy. Results are averaged over 3 random seeds.}
\label{tab:downstream_pure_detailed}
\resizebox{\linewidth}{!}{
\begin{tabular}{lcccccccc|c}
\hline
Model &
Params. &
PIQA &
ARC-e &
ARC-c &
Hella. &
Wino. &
MMLU \tiny{(5-shot)} &
Lambada &
Avg.
\\
\hline \rowcolor{lightgray}
Llama 3.1 8b  & -- &79.05 & 82.15 & 54.78 & 79.34 & 74.59 & 65.25 & 74.79  & 72.84 \\
GLA & 4.3M &   76.02 \stds{0.17} & 69.28 \stds{0.34} & 38.91 \stds{0.18} & 67.57 \stds{0.09} & 57.27 \stds{0.13} & 27.13 \stds{0.47} & 28.26 \stds{0.11} & 52.06 \stds{0.03} \\
kGLA & 8.5M & 75.81 \stds{0.02} & 71.06 \stds{0.17} & 39.79 \stds{0.41} & 67.92 \stds{0.01} & 58.41 \stds{0.44} & 27.37 \stds{0.09} & 27.49 \stds{0.31} & 52.55 \stds{0.04} \\
GDN & 8.5M &  76.30 \stds{0.34} & 70.81 \stds{0.47} & 39.36 \stds{0.27} & 70.75 \stds{0.08} & 58.17 \stds{0.34} & 27.55 \stds{0.36} & 44.67 \stds{0.26} & 55.37 \stds{0.11} \\
Hedgehog & 10.6M & 51.83 \stds{0.05} & 26.28 \stds{0.07} & 25.97 \stds{0.27} & 25.39 \stds{0.03} & 51.17 \stds{0.19} & 24.95 \stds{0.03} & 0.00 \stds{0.00} & 29.37 \stds{0.01} \\
\midrule  \rowcolor{lightgray}
Qwen3 8b & - &79.16 & 81.73 & 56.14 & 78.69 & 72.93 & 76.94 & 70.95 & 73.79\\
GLA & 4.7M & 67.59 \stds{0.31} & 58.02 \stds{0.05} & 30.12 \stds{0.38} & 52.83 \stds{0.19} & 51.64 \stds{0.37} & 24.50 \stds{0.07} & 11.29 \stds{0.07} & 42.29 \stds{0.01}  \\
kGLA & 9.6M & 67.23 \stds{0.07} & 59.61 \stds{0.31} & 31.06 \stds{0.05} & 53.32 \stds{0.06} & 52.57 \stds{0.09} & 25.23 \stds{0.23} & 11.33 \stds{0.19} & 42.91 \stds{0.08}   \\
GDN & 9.6M & 68.53 \stds{0.29} & 64.79 \stds{0.15} & 35.52 \stds{0.38} & 59.64 \stds{0.19} & 54.28 \stds{0.23} & 25.57 \stds{0.23} & 23.60 \stds{0.33} & 47.42 \stds{0.08}  \\
Hedgehog & 11.9M &  52.79 \stds{0.04} & 26.52 \stds{0.02} & 24.63 \stds{0.08} & 25.88 \stds{0.03} & 49.67 \stds{0.27} & 22.91 \stds{0.01} & 0.00 \stds{0.00} & 28.91 \stds{0.04}  \\
\hline
\end{tabular}
}
\end{table*}

\paragraph{Latency and memory}
All latecy measurements are run on a single NVIDIA H100 GPU with batch size 8. For each prompt length on the x-axis, we (i) run a forward pass on the prompt and then (ii) decode and time the generation of 16 new tokens. We report end-to-end latency for these 16 tokens, and peak GPU memory allocated during the run. We compare the linear attention variants against the full-attention baseline implemented with \texttt{FlashAttention-2 (FA2)}.

\vspace{2mm}
\paragraph{Approximation error}
We compute approximation error as the mean squared error (MSE) between the attention-block outputs of the frozen full-attention model and the outputs produced by the corresponding linearized replacement, evaluated on the same inputs. We use a random subset of 8,000 sequences from the Clean Alpaca~\citep{alpaca} dataset to compute this metric (this can serve as a hold-out set, as we use DCLM-Edu to train the model). For each layer $\ell$, we compute
\[
\mathrm{MSE}^{(\ell)} = \frac{1}{Td}\left\|y^{(\ell)}_{\text{full}} - y^{(\ell)}_{\text{lin}}\right\|_2^2,
\quad
\mathrm{nMSE}^{(\ell)} = \frac{\mathrm{MSE}^{(\ell)}}{\mathrm{Var}\!\left(y^{(\ell)}_{\text{full}}\right)},
\]
where $y^{(\ell)}_{\text{full}}, y^{(\ell)}_{\text{lin}} \in \mathbb{R}^{T\times d}$ denote the attention-block outputs for a sequence of length $T$, and $\mathrm{Var}(\cdot)$ is the empirical variance of the full-attention activations at that layer. 

\begin{figure}[h] %
\centering
\begin{subfigure}{0.31\linewidth}
  \centering
    \includegraphics[width=1\linewidth]{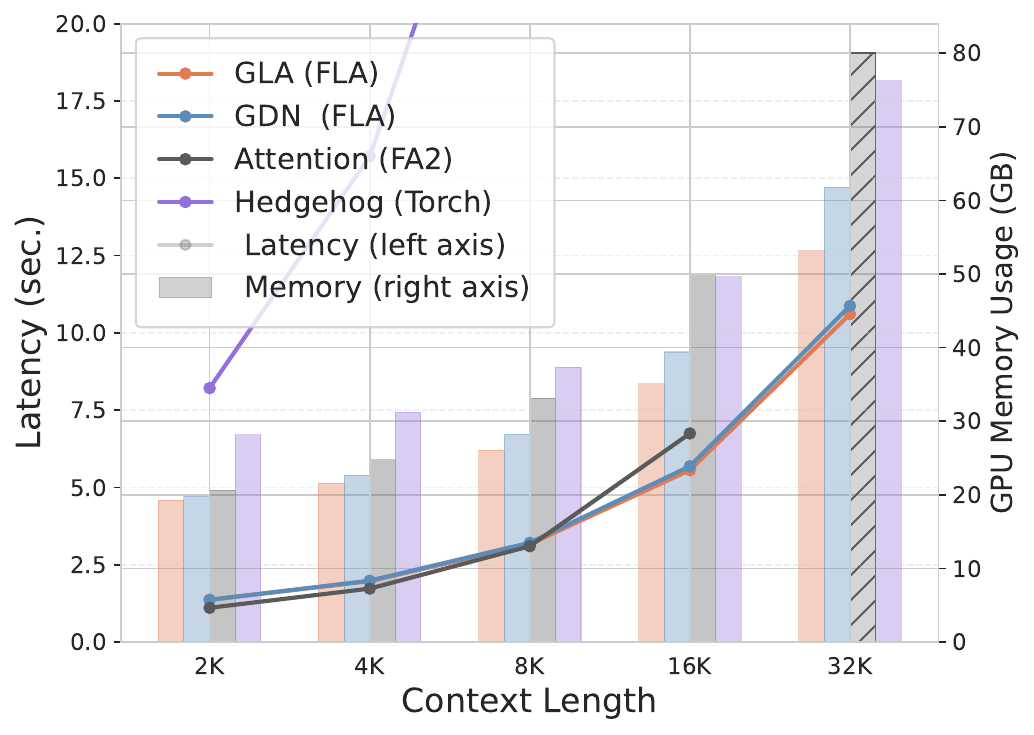} 
  \caption{Latency}
\end{subfigure}\hfill
\begin{subfigure}{0.31\linewidth}
  \centering
  \includegraphics[width=1\linewidth]{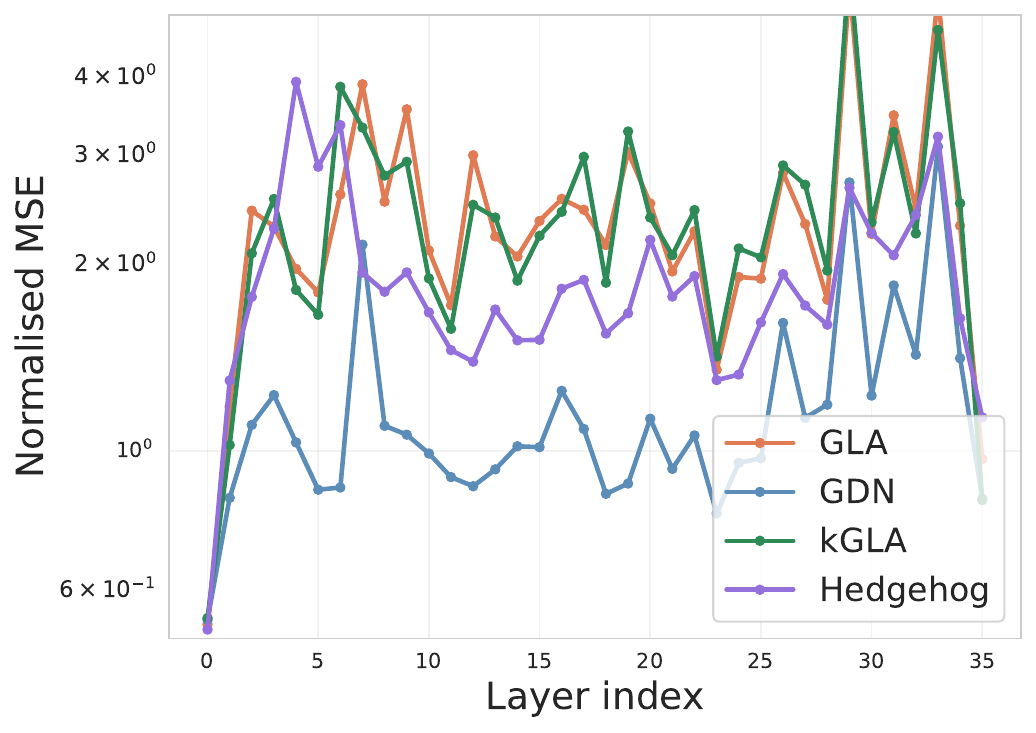} 
  \caption{MSE Per Layer}
\end{subfigure}\hfill
\begin{subfigure}{0.31\linewidth}
  \centering
  \includegraphics[width=1\linewidth]{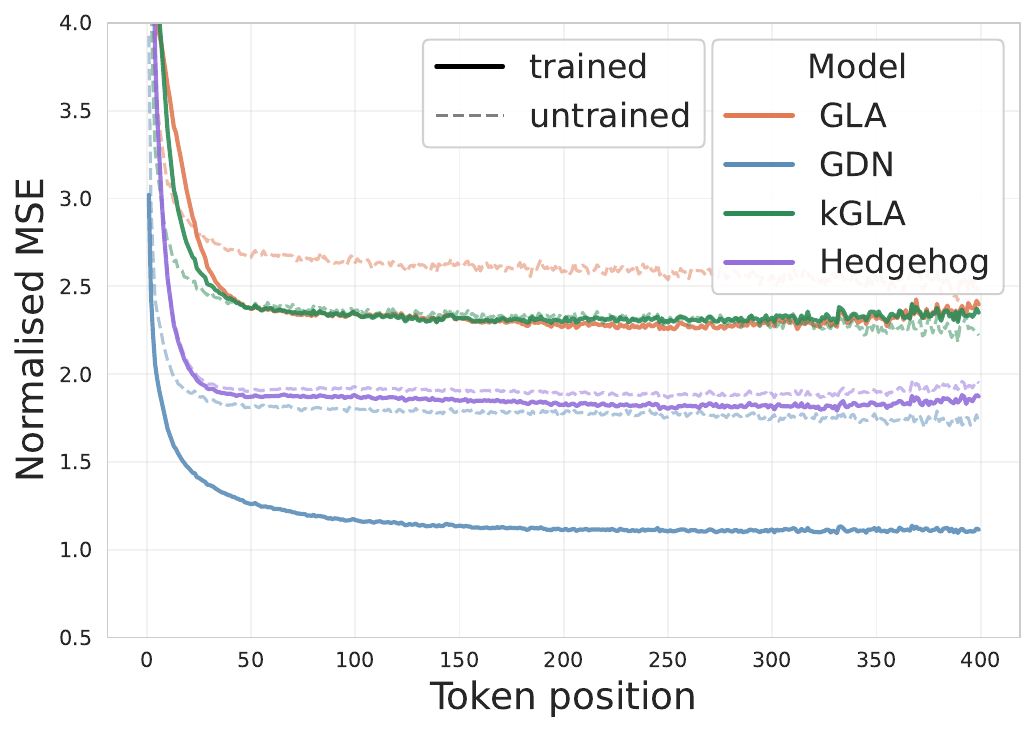} 
  \caption{MSE per Token}
\end{subfigure}
\caption{Comparison of linear attention variants for  Qwen3-8b linearization.}\label{fig:pure_mse_latecy_qwen}
\vskip -15pt
\end{figure}

\paragraph{Assumption Checks: Projection Residuals and Key Concentration}
\label{app:assumption_checks}

Our first-order analysis relies on two empirically checkable conditions: 
\begin{itemize}
    \item keys are \emph{approximately concentrated} around a common direction, and 
    \item subtracting the mean key is well-approximated by projecting orthogonally to that mean.
\end{itemize}
We evaluate these conditions via two diagnostics computed from the frozen full-attention model keys.

\noindent\textbf{Notation.}
Let $k^{(\ell,h)}_i \in \mathbb{R}^{d}$ denote the (RoPE-processed) key at layer $\ell$, head $h$, and token position $i$.
Because Lem.~2--3 assume unit-norm keys, we use normalized keys
\[
\tilde{k}^{(\ell,h)}_i \;=\; \frac{k^{(\ell,h)}_i}{\|k^{(\ell,h)}_i\|_2 + \varepsilon},
\]
with a small $\varepsilon>0$ for numerical stability.
For a prefix of length $t$, define the running mean key direction
\[
\bar{k}^{(\ell,h)}_t \;=\; \frac{1}{t}\sum_{j=1}^{t}\tilde{k}^{(\ell,h)}_j.
\]

\paragraph{Metric 1: Key Concentration}
We measure \emph{key concentration} at token position $t$ as the cosine similarity between the key and its running mean direction:
\[
C^{(\ell,h)}_t \;=\; \cos\bigl(\tilde{k}^{(\ell,h)}_t,\bar{k}^{(\ell,h)}_t\bigr)
\;\in\;[-1,1].
\]
When keys are strongly aligned , $\bar{k}^{(\ell,h)}_j$ becomes a stable direction and $C_t$ stays close to 1; when keys are dispersed, and $C_t$ decreases. This directly corresponds to the small angle to the mean regime used in Lemma~2--3 (via $\cos(\alpha_i)$).

\paragraph{Metric 2: Projection residual.}
Lemma~2 replaces subtracting the mean key $(\tilde{k}_i-\bar{k}_t)$ with a projection orthogonal to the mean direction $(I-\Pi_{\bar{k}_t})\tilde{k}_i$, where
\[
\Pi_{\bar{k}_t} \;=\; \frac{\bar{k}_t\bar{k}_t^\top}{\|\bar{k}_t\|_2^2 + \varepsilon}.
\]
We quantify the quality of this approximation using the \emph{projection residual}:
\[
R^{(\ell,h)}_t \;=\; \frac{1}{t}\sum_{i=1}^{t}
\left\|\,(\tilde{k}^{(\ell,h)}_i - \bar{k}^{(\ell,h)}_t) - (I-\Pi_{\bar{k}^{(\ell,h)}_t})\tilde{k}^{(\ell,h)}_i \right\|_2.
\]
Equivalently, the inner term simplifies to 
$\Pi_{\bar{k}_t}\tilde{k}_i-\bar{k}_t$, so the residual measures how well 
\emph{mean subtraction} is captured by an \emph{orthogonal projection} onto the complement of the mean-key direction. 
Smaller $R_t$ indicates Lem.~2 is a good approximation for that prefix.

For each evaluation sequence, we compute $C^{(\ell,h)}_t$ and $R^{(\ell,h)}_t$ for all token indices $t$ and all (layer, head) pairs, then average across sequences and across heads/layers:
\[
C_t = \mathbb{E}_{\text{seq},\ell,h}\bigl[C^{(\ell,h)}_t\bigr], \qquad
R_t = \mathbb{E}_{\text{seq},\ell,h}\bigl[R^{(\ell,h)}_t\bigr].
\]
These produce the token-index curves shown in Fig.~\ref{fig:lemmas}, \ref{fig:lemmas_qwen}(b). 

To relate the assumptions to approximation quality, we additionally bin token positions by $C^{(\ell,h)}_t$ (key concentration bins) or $R^{(\ell,h)}_t$ (projection residual bins) and compute the corresponding normalized MSE within each bin, yielding the binned trends in Fig.~\ref{fig:lemmas}, \ref{fig:lemmas_qwen}(a).

\begin{figure}[h] %
\centering
\captionsetup{type=figure}

\begin{subfigure}{0.66\linewidth}
  \centering
  \includegraphics[width=1\linewidth]{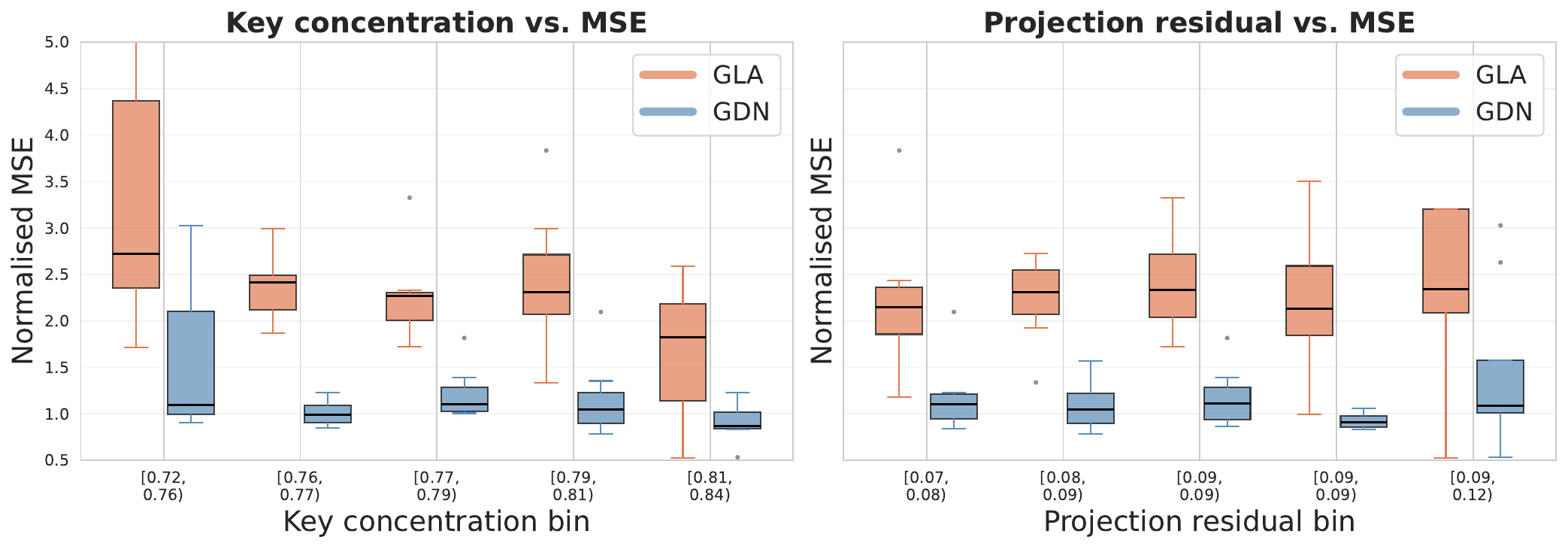} 
            \caption{Per layer normalized MSE}
\end{subfigure}\hfill
\begin{subfigure}{0.33\linewidth}
  \centering
\includegraphics[width=1\linewidth]{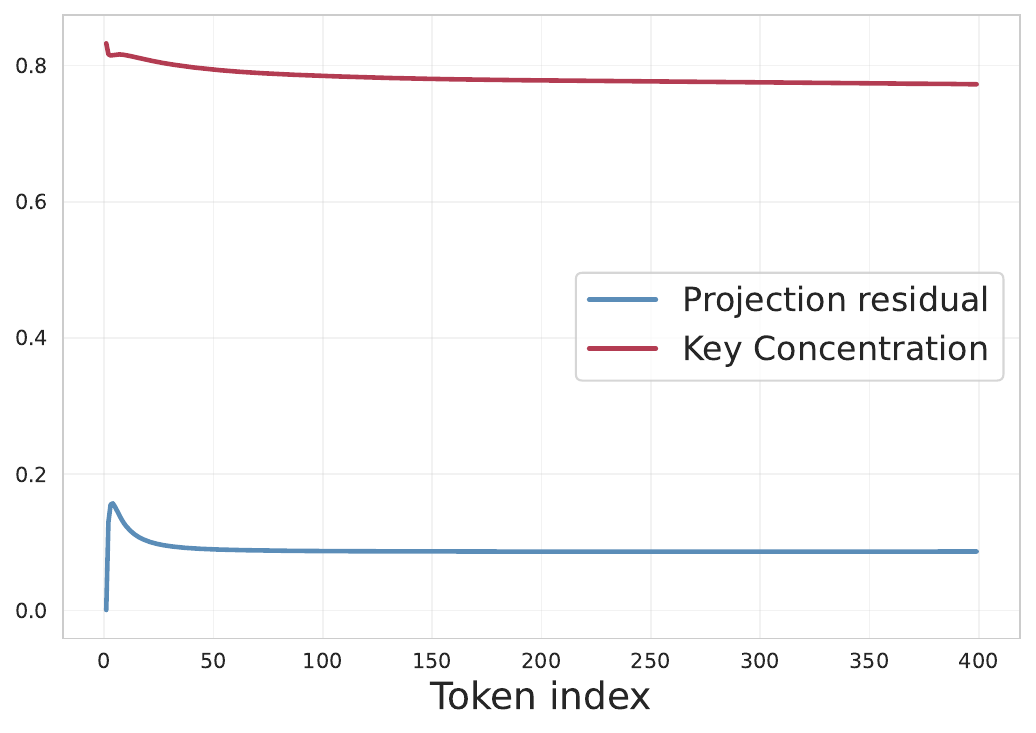} 
            \caption{Token index}
\end{subfigure}

\caption{Key Concentration and Projection Residual for Qwen3-8b}\label{fig:lemmas_qwen}
\end{figure}

\newpage
\subsection{SWA and Sink Tokens}\label{app:swa-sink-detailed}

\begin{table*}[h]
\centering
\caption{Downstream performance of the linearized models with SWA and Sink Tokens. We report the accuracy on common reasoning benchmarks (PiQA, ARC-e, ARC-c, HellaSwag, WinoGrande), 5-shot MMLU and Lambada (OpenAI). For ARC-c and HellaSwag we use normalized accuracy. Results are averaged over 3 random seeds.}
\label{tab:downstream_pure_swa_detailed}
\resizebox{\linewidth}{!}{
\begin{tabular}{lcccccccc|c}
\hline
Model &
Params. &
PIQA &
ARC-e &
ARC-c &
Hella. &
Wino. &
MMLU \tiny{(5-shot)} &
Lambada &
Avg.
\\
\hline \rowcolor{lightgray}
Llama 3.1 8b  & -- &79.05 & 82.15 & 54.78 & 79.34 & 74.59 & 65.25 & 74.79  & 72.84 \\
\multicolumn{2}{l}{\textbf{SWA (56) + Sink (8)}} \\
GLA & 4.3M & 78.94 \stds{0.03} & 81.94 \stds{0.04} & 54.81 \stds{0.08} & 75.55 \stds{0.03} & 74.11 \stds{0.00} & 44.25 \stds{0.08} &  59.31 \stds{0.11} & 66.99 \stds{0.02}  \\
kGLA & 8.5M & 78.98 \stds{0.04} & 81.93 \stds{0.04} & 54.58 \stds{0.03} & 75.63 \stds{0.04} & 74.11 \stds{0.00} & 43.28 \stds{0.03} &  59.57 \stds{0.19} & 66.87 \stds{0.02}\\
GDN & 8.5M &  79.07 \stds{0.04} & 82.10 \stds{0.01} & 54.92 \stds{0.03} & 78.86 \stds{0.03} & 74.11 \stds{0.00} & 58.88 \stds{0.14} & 68.82 \stds{0.13} & 70.97 \stds{0.03} \\
\textbf{SWA (64)} \\
GLA & 4.3M &    77.64 \stds{0.03} & 81.12 \stds{0.04} & 53.36 \stds{0.08} & 55.92 \stds{0.22} & 74.59 \stds{0.00} & 39.50 \stds{0.60} &  36.19 \stds{0.14} & 59.76 \stds{0.10} \\
kGLA & 8.5M &  77.53 \stds{0.05} & 81.00 \stds{0.03} & 53.24 \stds{0.05} & 55.44 \stds{0.10} & 74.59 \stds{0.00} & 37.63 \stds{0.02} & 36.86 \stds{0.16} & 59.47 \stds{0.04} \\
GDN & 8.5M &  78.06 \stds{0.07} & 80.98 \stds{0.05} & 53.58 \stds{0.15} & 56.44 \stds{0.21} & 74.59 \stds{0.00} & 51.57 \stds{0.77} & 41.92 \stds{0.22} & 62.45 \stds{0.09} \\
\midrule  \rowcolor{lightgray}
Qwen3 8b & - &79.16 & 81.73 & 56.14 & 78.69 & 72.93 & 76.94 & 70.95 & 73.79\\
\multicolumn{2}{l}{\textbf{SWA (56) + Sink (8)}} \\
GLA & 4.9M & 79.16 \stds{0.03} & 82.07 \stds{0.04} & 56.34 \stds{0.08} & 75.94 \stds{0.01} & 71.98 \stds{0.00} & 67.40 \stds{0.05} & 56.50 \stds{0.29} & 69.91 \stds{0.06}  \\
kGLA & 9.6M &79.14 \stds{0.02} & 81.99 \stds{0.02} & 56.14 \stds{0.00} & 76.15 \stds{0.05} & 71.98 \stds{0.00} & 67.70 \stds{0.17} & 57.05 \stds{0.12} & 70.02 \stds{0.03}  \\
GDN & 9.6M &79.22 \stds{0.05} & 81.82 \stds{0.02} & 56.37 \stds{0.03} & 77.90 \stds{0.03} & 71.98 \stds{0.00} & 70.97 \stds{0.18} & 64.53 \stds{0.04} & 71.83 \stds{0.03}  \\
\textbf{SWA (64)} \\
GLA & 4.9M &78.75 \stds{0.02} & 81.09 \stds{0.04} & 55.66 \stds{0.12} & 66.99 \stds{0.08} & 72.93 \stds{0.00} & 48.10 \stds{0.51} &  52.55 \stds{0.21} & 65.15 \stds{0.07} \\
kGLA & 9.6M & 78.87 \stds{0.13} & 81.20 \stds{0.01} & 55.55 \stds{0.00} & 67.10 \stds{0.16} & 72.93 \stds{0.00} & 48.86 \stds{0.53} &  53.30 \stds{0.22} & 65.40 \stds{0.13} \\
GDN & 9.6M & 78.91 \stds{0.02} & 81.48 \stds{0.06} & 55.92 \stds{0.15} & 67.95 \stds{0.18} & 72.93 \stds{0.00} & 53.70 \stds{0.20} & 59.30 \stds{0.13} & 67.17 \stds{0.07}\\
\hline
\end{tabular}
}
\end{table*}

\newpage
\subsection{Short Convolution and LoRA}\label{app:sc-lora-detailed}

\begin{table*}[h]
\centering
\caption{Downstream performance of the linearized models with SWA and modified projections. We report the accuracy on common reasoning benchmarks (PiQA, ARC-e, ARC-c, HellaSwag, WinoGrande), 5-shot MMLU and Lambada (OpenAI). For ARC-c and HellaSwag we use normalized accuracy. Results are averaged over 3 random seeds.}
\label{tab:downstream_sc_lora_swa_detailed}
\resizebox{\linewidth}{!}{
\begin{tabular}{lcccccccc|c}
\hline
Model &
Params. &
PIQA &
ARC-e &
ARC-c &
Hella. &
Wino. &
MMLU \tiny{(5-shot)} &
Lambada &
Avg.
\\
\hline \rowcolor{lightgray}
Llama 3.1 8b  & -- &79.05 & 82.15 & 54.78 & 79.34 & 74.59 & 65.25 & 74.79  & 72.84 \\
\textbf{Short Conv} \\
GLA & 6.1M &79.11 \stds{0.00} & 81.99 \stds{0.00} & 54.95 \stds{0.00} & 78.93 \stds{0.03} & 74.11 \stds{0.00} & 59.13 \stds{0.18} & 68.70 \stds{0.12} & 70.99 \stds{0.04} \\
kGLA & 10.3M & 79.11 \stds{0.00} & 81.99 \stds{0.02} & 54.92 \stds{0.03} & 78.87 \stds{0.02} & 74.11 \stds{0.00} & 59.02 \stds{0.12} & 68.65 \stds{0.11} & 70.95 \stds{0.03}  \\
GDN & 10.3M & 79.07 \stds{0.02} & 81.97 \stds{0.01} & 54.92 \stds{0.08} & 78.79 \stds{0.07} & 74.11 \stds{0.00} & 59.17 \stds{0.17} & 69.01 \stds{0.03} & 71.01 \stds{0.04} \\
\textbf{LoRA} \\
GLA &11.1M & 78.87 \stds{0.16} & 79.55 \stds{0.14} & 51.25 \stds{0.32} & 76.93 \stds{0.04} & 71.93 \stds{0.28} & 50.10 \stds{0.59} & 67.51 \stds{1.16} & 68.02 \stds{0.31}\\
kGLA &15.3M & 79.00 \stds{0.14} & 79.92 \stds{0.06} & 51.62 \stds{0.68} & 76.84 \stds{0.21} & 70.88 \stds{0.30} & 49.95 \stds{0.39} & 67.79 \stds{0.60} & 68.00 \stds{0.29} \\
GDN &15.3M &  79.47 \stds{0.29} & 80.86 \stds{0.34} & 53.33 \stds{0.58} & 78.03 \stds{0.31} & 72.32 \stds{0.09} & 54.80 \stds{2.79} & 70.29 \stds{1.96} & 69.87 \stds{0.85}\\
\midrule  \rowcolor{lightgray}
Qwen3 8b & - &79.16 & 81.73 & 56.14 & 78.69 & 72.93 & 76.94 & 70.95 & 73.79\\
\textbf{Short Conv} \\
GLA & 6.9M &79.33 \stds{0.03} & 81.92 \stds{0.04} & 56.43 \stds{0.12} & 77.93 \stds{0.07} & 71.98 \stds{0.00} & 70.58 \stds{0.17} & 64.20 \stds{0.05} & 71.77 \stds{0.04} \ \\
kGLA& 11.6M &79.34 \stds{0.02} & 81.89 \stds{0.04} & 56.43 \stds{0.03} & 77.93 \stds{0.08} & 71.98 \stds{0.00} & 70.75 \stds{0.17} & 64.20 \stds{0.05} & 71.77 \stds{0.04} \ \\ 
GDN & 11.6M & 79.29 \stds{0.04} & 81.86 \stds{0.02} & 56.48 \stds{0.00} & 77.84 \stds{0.03} & 71.98 \stds{0.00} & 71.10 \stds{0.13} & 64.64 \stds{0.12} & 71.88 \stds{0.02} \\
\textbf{LoRA} \\
GLA & 12.5M & 79.18 \stds{0.21} & 84.47 \stds{0.06} & 62.17 \stds{0.59} & 75.56 \stds{0.04} & 71.95 \stds{0.17} & 65.99 \stds{0.27} & 66.65 \stds{0.35} & 72.28 \stds{0.15}\\
kGLA & 17.4M&  78.91 \stds{0.17} & 84.85 \stds{0.11} & 61.83 \stds{0.27} & 76.04 \stds{0.02} & 73.22 \stds{0.15} & 66.29 \stds{0.30} & 65.48 \stds{0.51} & 72.37 \stds{0.14}\\
GDN & 17.3M &  79.27 \stds{0.17} & 84.39 \stds{0.15} & 62.29 \stds{0.39} & 76.56 \stds{0.04} & 73.35 \stds{0.50} & 69.61 \stds{0.19} & 67.94 \stds{0.77} & 73.34 \stds{0.05} \\
\hline
\end{tabular}
}
\end{table*}

\newpage
\subsection{Long context Evaluation on Ruler Benchmark}\label{app:ruler_results}
We evaluate long-context behavior on S-NIAH and an extended RULER benchmark at 4k context length (Tables~\ref{tab:ruler_niah} and \ref{tab:ruler_extended_4k}). Under the fairest comparison setting, prior linearization baselines that use the same 64-token local windowm, our GDN-based linearization substantially outperforms LoLCATs and Liger-GLA by a wide margin, consistently across both common reasoning (Table~\ref{tab:res_llama31}) and long-context evaluation (Table~\ref{tab:ruler_extended_4k}). 

At the same time, Table~\ref{tab:ruler_extended_4k} shows a strong dependence on cache budget. 
The small-cache configuration (128 total tokens: 8 sink + 120 sliding) is not viable for retrieval-heavy subsets. 
Increasing the cache (e.g., 896 tokens) makes GDN competitive on average, but exposes a clear profile: strong on aggregation/QA-style tasks and weaker on tasks requiring multiple independent memory items in parallel (e.g., multi-query/multi-value retrieval and variable tracking).
Since we use a fixed-budget cache (8 sink tokens plus a sliding window) and do not perform adaptive token selection at inference time, these results suggest an obvious next step: combining GDN with content-aware token retention to improve multi-item retrieval under a fixed cache budget.

\begin{table}[h]
\centering
\small
\caption{S-NIAH accuracy at 4K context length at different cache token budgets.}\label{tab:ruler_niah}
\begin{tabular}{llcccc}
\toprule
\multirow{2}{*}{Model} & \multirow{2}{*}{Task (4K)} &
\multicolumn{4}{c}{Cache Tokens} \\
\cmidrule(lr){3-6}
& & 128 & 256 & 512 & 1024  \\
\midrule
\multirow{3}{*}{Sliding-Window Attention}
& S-NIAH-1 & 0   & 0   & 5.8 & --  \\
& S-NIAH-2 & 0   & 0   & 0   & --  \\
& S-NIAH-3 & 0   & 0   & 0   & --  \\
\midrule

\multirow{3}{*}{LoLCATs}
& S-NIAH-1 & 0   & 2.2 & 8.8 & --   \\
& S-NIAH-2 & 4.2 & 8.2 & 16.6& --   \\
& S-NIAH-3 & 1.6 & 2.2 & 5.8 & --  \\
\midrule

\multirow{3}{*}{Liger-GLA}
& S-NIAH-1 & 0.2 & 2.8 & 0   & --   \\
& S-NIAH-2 & 1.0 & 0.8 & 0   & --   \\
& S-NIAH-3 & 0.8 & 0.6 & 0   & --   \\
\midrule

\multirow{3}{*}{STILL}
& S-NIAH-1 & 1.4 & 10.6& 86.2& 89.0 \\
& S-NIAH-2 & 2.8 & 9.4 & 37.4& 69.8 \\
& S-NIAH-3 & 3.4 & 3.4 & 12.2& 19.4 \\
\midrule

\multirow{3}{*}{GDN (Ours)}
& S-NIAH-1 & 2.6  & 6.0 & 13.20  & 26.2 \\
& S-NIAH-2 & 2.0  & 5.0  & 16.0 &  34.2\\
& S-NIAH-3 & 1.8 & 4.2 &  13.8 & 28.4\\

\bottomrule
\end{tabular}%
\end{table}

\begin{table*}[h]
\centering
\caption{Extended RULER benchmark at 4K context length.}
\vskip -5pt
\resizebox{\linewidth}{!}{
\begin{tabular}{lclllllllll}
\toprule
\textbf{Model} &
\textbf{Cache Tokens} &
\textbf{MK} &
\textbf{MQ} &
\textbf{MV} &
\textbf{CWE} &
\textbf{FWE} &
\textbf{HQA} &
\textbf{SQA} &
\textbf{VT} &
\textbf{Avg} \\
\midrule

Mamba-2 & 0 &
19.9 & 49.1 & 35.0 & 32.9 & 76.6 & 31.8 & 35.5 & 76.5 & 44.7 \\
LoLCATs & 896 &
4.9 & 3.3 & 3.6 & 3.0 & 13.6 & 14.2 & 14.0 & 0.7 & 7.2 \\
Liger-GLA & 896 & 0 & 0 & 0 & 0.2 & 0.7 & 6.4 & 9.1 & 0 & 2.1 \\
LoLA & 896 &
19.5 & \textbf{67.6} & \textbf{65.0} & 45.9 & 51.3 & 24.2 & 53.9 & \textbf{85.2} & 45.2 \\

STILL & 896 &
21.5 & 52.2 & 51.3 & 36.7 & 76.3 & 32.2 & 49.6 & 63.7 & \textbf{47.9} \\

\midrule
Ours (GDN) & 128 & 1.33 \stds{0.00} & 2.40 \stds{0.03} & 2.38 \stds{0.04} & 34.44 \stds{3.39} & 60.04 \stds{2.37} & 21.27 \stds{0.18} & 23.49 \stds{0.36} & 2.65 \stds{0.04} & 18.50 \stds{0.55} 
\\
Ours (GDN) & 896 & \textbf{25.36} \stds{0.06} & 22.62 \stds{0.09} & 25.38 \stds{0.02} & \textbf{80.49} \stds{3.86} & \textbf{81.67} \stds{1.64} & \textbf{38.60} \stds{0.31} & \textbf{68.37} \stds{0.54} & 37.68 \stds{0.56} & \textbf{47.52} \stds{0.66} 
 \\
\bottomrule
\end{tabular}}
\label{tab:ruler_extended_4k}
\end{table*}

\newpage
\subsection{Results for all Qwen models}\label{app:qwen-scaling-detailed}

\begin{table}[h]
\centering
\caption{Scaling result for all Qwen3 models.}
\label{tab:qwen_scaling}
\resizebox{\linewidth}{!}{
\begin{tabular}{lllllllll|l}
\toprule
Model & Attention & PiQA & ARC-e & ARC-c & HellaSwag & WinoGrande & MMLU & Lambada & Avg \\
\midrule
Qwen3-0.6b & Full & 70.24 & 65.74 & 38.31 & 53.69 & 58.56 & 52.48 & 53.68 & 56.10 \\
 & GDN & 69.89 \stds{0.02} & 65.64 \stds{0.01} & 38.25 \stds{0.06} & 52.53 \stds{0.04} & 57.93 \stds{0.00} & 47.22 \stds{0.11} & 47.77 \stds{0.04} & 54.18 \stds{0.02} \\ \midrule
Qwen3-1.7b & Full & 75.90 & 73.36 & 45.05 & 66.40 & 64.40 & 62.61 &  63.23 & 64.42  \\
 & GDN & 75.55 \stds{0.02} & 73.57 \stds{0.00} & 44.94 \stds{0.03} & 65.34 \stds{0.04} & 64.64 \stds{0.00} & 57.78 \stds{0.11} & 56.50 \stds{0.11} & 62.62 \stds{0.01} \\
 \midrule
Qwen3-4b & Full & 78.02 & 79.21 & 51.62 & 73.67 & 70.72 & 72.98 & 69.09 & 70.76  \\
 & GDN & 78.27 \stds{0.02} & 78.98 \stds{0.01} & 51.96 \stds{0.05} & 72.81 \stds{0.01} & 70.48 \stds{0.00} & 67.52 \stds{0.13} & 62.68 \stds{0.08} & 68.96 \stds{0.03}  \\
\midrule
Qwen3-8b & Full & 79.16 & 81.73 & 56.14 & 78.69 & 72.93 & 76.94 &   70.95 & 73.79 \\
 & GDN & 79.29 \stds{0.04} & 81.86 \stds{0.02} & 56.48 \stds{0.00} & 77.84 \stds{0.03} & 71.98 \stds{0.00} & 71.10 \stds{0.13} & 64.64 \stds{0.12} & 71.88 \stds{0.02}\\
\midrule
Qwen3-14b & Full & 80.47 & 83.33 & 58.96 & 81.27 & 74.19 & 80.46 & 74.33 & 76.14 \\
 & GDN & 80.36 \stds{0.00} & 83.36 \stds{0.04} & 59.36 \stds{0.03} & 80.73 \stds{0.07} & 74.35 \stds{0.00} & 74.63 \stds{0.03} & 67.93 \stds{0.12} & 74.39 \stds{0.02} \\
\midrule
Qwen3-32b & Full & 81.01 & 84.39 & 60.84 & 82.62 & 72.93 & 81.73 & 66.97 & 75.78 \\
 & GDN & 81.08 \stds{0.05} & 84.34 \stds{0.02} & 60.78 \stds{0.03} & 81.74 \stds{0.03} & 73.09 \stds{0.00} & 76.02 \stds{0.17} & 63.04 \stds{0.17} & 74.30 \stds{0.01}\\
\bottomrule
\end{tabular}
}
\end{table}

\newpage
\section{Hyperparameters and Implementation Details}\label{app:hyperparams}
We fixed the experimental hyperparameter for all the experiments. We did not run hyperparameter sweep and fix the same setting for all the base model. Our main motivation is that good approximation should not require extensive training and thus should not depend much on training hyperparameter. 
The only difference is that LoRa experiments use a smaller learning rate than other runs. We list these training details below in Table~\ref{tab:hyperparameters}. We run all the experiments with 3 random seed, fixing the seed to \texttt{1, 2, 3}.

To make the implementation details clear, we also provide the shape of all the newly introduces tensors and mention any initialization that we use. It absent, default pytorch initialization is used. We use the following notations:
\begin{itemize}
    \item Hidden state: $h$
    \item Hidden state feature dimension: $\text{dim}_h$
    \item Head dimension: $\text{dim}_{{head}}$
    \item Number of query heads: $\text{head}_q$
    \item Number of key and value heads: $\text{head}_{kv}$
    \item Output dimension: $\text{dim}_{out} = \text{dim}_{{head}} \cdot \text{head}_q$
\end{itemize}

\texttt{FLA} kernels assumes gates to be in Log scale for stability. Thus on practice we use \texttt{LogSigmoid} activation for the $\log\alpha$. We use \texttt{HedgehogFeatureMap} from the \texttt{FLA} library to compute $\phi_q$ and $\phi_k$ projections. We use \texttt{ShortConvolution}  from the \texttt{FLA} library to compute short convolutions. We apply them after RoPE and before $L2$ normalization. For LoRA we use \texttt{PEFT} library.

\begin{table}[h]
    \caption{All the hyperparameters used for our experimnets.}
    \label{tab:hyperparameters}
\begin{center}
\begin{adjustbox}{max width=\textwidth}
        \begin{tabular}{@{}l|c@{}}
            \toprule
            \multirow{1}{*}{\textbf{Hyperparameter}}  & \\
            \midrule
\multicolumn{2}{c}{\textbf{Optimization}}  \\
learning rate & 2e-3 \\
learning rate (LoRA training) & 5e-4 \\
lr scheduler & Cosine\\
lr warmup & linear \\
lr warmup steps & 100 \\
optimizer & AdamW \\
weight decay & 0 \\
gradient clipping & 1\\
training steps & 2500 \\
\midrule
\multicolumn{2}{c}{\textbf{Training Data}} \\
batch size & 1\\
sequence len & 4096 \\
Dataset & DCLM-Edu \\
\midrule
    \multicolumn{2}{c}{\textbf{Architecture details: Pure Linear (Tables~\ref{tab:downstream_pure},\ref{tab:downstream_pure_swa})}} \\
Gate projection & $y = h W_{\alpha}^{r_1}W_{\alpha}^{r_2} + b_{\alpha}$ \\
& $W_{\alpha}^{r_1} \in \mathbb{R}^{\text{dim}_h\times 16}$,  $W_{\alpha}^{r_2} \in \mathbb{R}^{16\times \text{dim}_{out}}$\\
Gate activation & \texttt{Sigmoid} \\
Gate Initialization & $b_{\alpha} = 1$\\
Beta Projection & $y = h W_{\beta} + b_{\beta}$ \\
& $W_{\beta} \in \mathbb{R}^{\text{dim}_h \times \text{head}_q}$\\
Beta Activation & \texttt{Sigmoid}\\
Beta Initialization & $b_{\beta} = -1$\\
Hedgehog features size & $ \mathbb{R}^{\text{dim}_{head}\times \text{dim}_{head}}$ \\
\midrule
\multicolumn{2}{c}{\textbf{Architecture details: Short Conv (Table~\ref{tab:downstream_sc_lora_swa})}} \\
kernel size & 8\\
bias & True \\
activation & None \\
\midrule
\multicolumn{2}{c}{\textbf{Architecture details: LoRA (Table~\ref{tab:downstream_sc_lora_swa})}} \\
Rank & 8\\
$\alpha$ & 16 \\
Dropout & 0 \\
Target Modules & k\_proj, v\_proj, q\_proj, o\_proj \\
\hline
\bottomrule
\end{tabular}
\end{adjustbox}
\end{center}
\end{table}

\newpage
\section{Broader Impact}\label{App:broader_impact}
Efficient long‑context inference can reduce serving cost and memory footprint, enabling broader access to long‑context capabilities. At the same time, cheaper inference can increase deployment scale and associated energy use. Our work does not introduce new training data or new model capabilities beyond the underlying backbones; it provides an efficient conversion recipe intended for research and deployment settings.

\end{document}